\documentclass{article} 
\usepackage{iclr2027_conference,times}

\usepackage{amsmath,amsfonts,bm}

\def\eqref#1{equation~\ref{#1}}

\def\1{\bm{1}}

\DeclareMathAlphabet{\mathsfit}{\encodingdefault}{\sfdefault}{m}{sl}
\SetMathAlphabet{\mathsfit}{bold}{\encodingdefault}{\sfdefault}{bx}{n}

\usepackage{hyperref}
\usepackage{url}
\usepackage{amsmath,amssymb,amsthm}
\usepackage{graphicx}
\usepackage{booktabs}
\usepackage[table]{xcolor}

\definecolor{cPos}{RGB}{58,164,110}
\definecolor{cNeg}{RGB}{217,106,89}
\definecolor{stdgray}{gray}{0.45}

\newtheorem{definition}{Definition}
\newtheorem{proposition}{Proposition}

\title{Stale, Misattributed, or Late: Where Personal Memory Fails Before Generation}

\author{
\textbf{Haonan Deng}\textsuperscript{1}
\quad
\textbf{Wichinpong Park Sinchaisri}\textsuperscript{2}
\quad
\\[0.6em]
\textsuperscript{1}
University of California, Berkeley
\\
\textsuperscript{2}
Haas School of Business, University of California, Berkeley
}

\iclrfinalcopy 

\begin{document}

\maketitle

\begin{abstract}
Personal memory for language agents is usually judged by whether the final answer is correct. That score hides errors that arise before generation: the memory block may contain an obsolete value, a fact about the wrong person, or no useful fact before the serving deadline. We measure these failures directly. Using Personal Fact Memory (PFM) as a reference layer, we find that temporal validity is primarily a property of memory construction in our setting. On a controlled revision benchmark, serving only the active value of each correctly keyed slot eliminates observed stale exposure; without update resolution, 70.3\% of prompts expose a superseded value. Once retrievers share the same active store and participant information, participant-aware BM25 is equivalent to the reference ranker within a prespecified $\pm0.02$ margin. The harder problem is assigning revisions to the right slot. Missed merges leave stale values active, whereas false merges silently remove current values; four LLM key assigners achieve higher key recall than a rule extractor yet produce lower clean-retrieval rates, and open-domain merge recall on LongMemEval never exceeds 0.062. Misattribution survives validity filtering: an entity posterior reduces same-name exposure on controlled data but cannot distinguish identically named speakers in LoCoMo. Two frozen language models reproduce prompt errors in generated text. Retrieval latency varies across rankers, but prompt prefill dominates turn-level latency on our hardware. These results argue for evaluating agent memory before generation, separating stored-state validity, identity resolution, abstention, and serving latency.
\end{abstract}

\section{Introduction}

A user drafts a message to a colleague, Alex Chen, and the assistant offers a continuation of ``the standup with Alex is at''. A useful continuation contains the time the two of them most recently agreed on. Before the model writes a word, the memory layer can fail in three different ways: it can surface 9:30am after the meeting moved to 10:45am, retrieve the standup time for a different Alex, or finish after prefill has begun. We call these failures staleness, misattribution, and lateness.

Several memory systems already handle temporal revisions at write time by linking a fact to its subject and property, closing the old value when a new one arrives, and retrieving only the value that remains active~\citep{rasmussen2025zep,chhikara2025mem0,yadav2026temporal,wang2026toki}. The same transaction-time principle is standard in temporal databases~\citep{jensen1999temporal}. The rule itself is not our contribution. The unresolved questions are what it buys when its prerequisites are imperfect and what errors remain after it works. Existing evaluations usually score the final answer, including LoCoMo, LongMemEval, MemoryAgentBench, and Memora~\citep{maharana2024evaluating,wu2024longmemeval,hu2026evaluating,uddin2026recall}. A final-answer score cannot distinguish a clean prompt from one that contains both the current value and a superseded value if the model happens to choose correctly. We therefore measure the memory block before generation: whether it contains the current value, a superseded value, a same-named other person's value, or no useful value before the serving deadline.

The first result is that temporal validity is largely determined by the stored state. With correct keys, restricting retrieval to active facts removes observed stale exposure for every retriever we test. Once the active store and participant information are held fixed, participant-aware BM25 is equivalent to PFM within a prespecified $\pm0.02$ margin, and dense retrieval does not improve clean retrieval. The harder problem is assigning revisions the right key. Missed merges leave obsolete values active, whereas false merges remove current values from the active store. Four LLM key assigners achieve key recall above 0.980 on paraphrased benchmark text yet yield lower clean-retrieval rates than a lower-recall rule extractor because they occasionally merge across slots. On LongMemEval, where the slot vocabulary is not supplied, merge recall remains at or below 0.062 even when existing store keys are exposed to the assigner.

The remaining failures require different information. Same-name competitors remain in active memory, and a fixed-$k$ retriever fills the prompt even when the store has no answer. A lightweight entity posterior reduces wrong-person exposure on controlled data while preserving facts about third parties, but it cannot distinguish two speakers represented by the same surface form in LoCoMo. A schema-aware slot gate abstains reliably on the controlled benchmark; a schema-free margin gate does not transfer. Serving time is separate again. Dense query encoding is slower than lexical retrieval, but on our hardware the memory block itself adds far more latency through model prefill. Experiments with two frozen language models confirm that prompt-level errors can propagate into generated text.

We make three contributions:
\begin{enumerate}
\item \textbf{Prompt-level evaluation.} We define current-value recall, stale exposure, misattribution, abstention, and clean retrieval by deadline, including requests for which the store contains no valid answer.
\item \textbf{Validity under imperfect updates.} We measure keyed supersession under controlled key noise, key granularity, out-of-order arrival, four LLM key assigners on paraphrased text, and open-domain key discovery on LongMemEval. The experiments separate the store invariant from the accuracy of the extractor that supplies its keys.
\item \textbf{Identity, abstention, and serving cost.} We evaluate identity resolution and abstention on controlled and public corpora, compare lexical, dense, hybrid, and query-time model baselines, and trace prompt-level errors into two frozen language models while measuring retrieval and prefill latency.
\end{enumerate}

\section{Related Work}

\paragraph{Write-time temporal validity.}\label{sec:related_validity} Agent memories rank observations by recency and relevance~\citep{park2023generative}, add forgetting curves~\citep{zhong2024memorybank}, page between memory tiers~\citep{packer2023memgpt}, link atomic notes~\citep{xu2026mem}, or retrieve over entity graphs and abstractions~\citep{gutierrez2024hipporag,xia2026memora}. A subset maintains one current value at write time, either through a model that judges contradiction~\citep{rasmussen2025zep,chhikara2025mem0} or through deterministic updates over structured facts~\citep{yadav2026temporal,wang2026toki}; other systems resolve conflicts at read time~\citep{banerjee2026apex} or order facts by stated rather than dialogue time~\citep{su2026beyond}. PFM uses the same basic transaction-time rule. Proposition~\ref{prop:one_active} states its store invariant and does not constitute a new temporal-memory mechanism. Our focus is the premise these systems require: whether a revision is recognized, assigned to the right slot, and ordered correctly. MemStrata reports that temporal validity sharply reduces stale-fact errors, and TOKI provides guarantees conditional on its structured representation; Sections~\ref{sec:keynoise} and~\ref{sec:real} measure what happens when the representation needed by that update rule is imperfect or unavailable.

\paragraph{Prompt-level memory evaluation and serving cost.} LoCoMo and LongMemEval evaluate long-horizon memory through final answers~\citep{maharana2024evaluating,wu2024longmemeval}; MemoryAgentBench includes selective forgetting as a memory competency~\citep{hu2026evaluating}; Memora penalizes reliance on invalidated memory~\citep{uddin2026recall}; MemConflict and STALE target conflicting or expired memories~\citep{tao2026memconflict,chao2026stale}. MemTrace is closest in separating retrieved context from the final answer~\citep{long2026memtrace}. We instead label the prompt by failure mode. A block can contain the current value, a superseded value, and another person's value simultaneously, so final-answer accuracy need not reveal the state presented to the model. We also treat serving time as part of the outcome. Recent systems study memory cost, vector retrieval on constrained hardware, and when dense retrieval is worth its expense~\citep{zhao2025ame,yuan2026agentir,zhou2026we}; read-copy-update and near-real-time search motivate the immutable publication path used here~\citep{mckenney1998read,gospodnetic2010lucene}. Our measurements place these costs beside prompt correctness because a clean block that arrives after prefill has begun cannot affect the current response.

\section{Problem Formulation}
\label{sec:problem}

Requests are indexed by $t$ and arrive at wall-clock time $\tau_t$. Let $\mathcal{M}_t$ denote the facts retained by the memory layer at that time, including facts preserved only as history. Request $t$ provides local context $c_t$, participant set $P_t$, fact limit $k$, character budget $B$, and deadline $D_t$ by which the memory block must be ready for prefill.

\begin{definition}[Personal fact]
A stored fact is $f=(z_f,E_f,P_f,\kappa_f,a_f,b_f,s_f)$, where $z_f$ is prompt-ready text, $E_f$ a set of normalized entities, $P_f$ the participants of the source observation, $\kappa_f$ an optional key naming a mutable slot, $[a_f,b_f)$ the interval during which the store treats $f$ as active ($b_f=\infty$ until supersession), and $s_f$ the time the fact was last observed. The active view at request $t$ is $\mathcal{A}_t=\{f\in\mathcal{M}_t:a_f\le\tau_t<b_f\}$.
\end{definition}

The activity interval records the store's state, not the truth time in the world. If an extractor misses a revision, an outdated value can remain active. A retrieval policy $\pi$ maps $(\mathcal{M}_t,c_t,P_t)$ to an ordered set $S_{\pi,t}\subseteq\mathcal{M}_t$ in time $L_{\pi,t}$, subject to $|S_{\pi,t}|\le k$ and a serialized length of at most $B$. A policy may restrict itself to $\mathcal{A}_t$ or ignore activity intervals, so validity-agnostic retrievers remain admissible comparisons.

For evaluation, $R_t(f)=1$ when $f$ states a value for the slot requested at $t$, for any entity whose name matches the request surface form; $I_t(f)=1$ when $f$ concerns the intended entity; and $V_t(f)=1$ when its value is still current. The target set is $\mathcal{G}_t=\{f:R_t(f)=I_t(f)=V_t(f)=1\}$, and the policy observes none of these labels. Current-value recall is $C_{\pi,t}=\mathbf{1}\{S_{\pi,t}\cap\mathcal{G}_t\neq\varnothing\}$, stale exposure is $E^{\mathrm{stale}}_{\pi,t}=\mathbf{1}\{\exists f\in S_{\pi,t}:R_t(f)=I_t(f)=1,V_t(f)=0\}$, and misattribution is $E^{\mathrm{id}}_{\pi,t}=\mathbf{1}\{\exists f\in S_{\pi,t}:R_t(f)=1,I_t(f)=0\}$. When $\mathcal{G}_t=\varnothing$, recall is zero for every policy, including one that correctly returns nothing, so clean retrieval requires a separate unanswerable case:
\begin{equation}
Y_{\pi,t}=
\begin{cases}
C_{\pi,t}\,(1-E^{\mathrm{stale}}_{\pi,t})(1-E^{\mathrm{id}}_{\pi,t}), & \mathcal{G}_t\neq\varnothing,\\[2pt]
1-\mathbf{1}\{\exists f\in S_{\pi,t}:R_t(f)=1\}, & \mathcal{G}_t=\varnothing,
\end{cases}
\qquad Y_{\pi,t}(D)=Y_{\pi,t}\,\mathbf{1}\{L_{\pi,t}\le D\},
\label{eq:clean}
\end{equation}
We report the answerable and unanswerable cases separately because frequent abstention can improve the latter while harming the former. We also record $A_{\pi,t}=\mathbf{1}\{S_{\pi,t}=\varnothing\}$. The exposure indicators are not mutually exclusive: a prompt may contain both the current and a superseded value, which recall alone does not reveal.

\paragraph{Keyed updates.} Suppose every recognized revision of a slot carries the same key $\kappa$ and updates are applied in receipt order. Committing a new fact $f'$ with key $\kappa$ at time $u$ sets $a_{f'}=u$ and $b_{f'}=\infty$ and closes the active predecessor by setting $b_f=u$; the predecessor remains as history. The commit is atomic with respect to readers.

\begin{proposition}[At most one active value per key]
\label{prop:one_active}
Fix a key $\kappa$. If updates carrying $\kappa$ are committed serially and atomically by the rule above, and at most one fact with key $\kappa$ is active initially, then every reader-visible state satisfies $|\{f\in\mathcal{A}:\kappa_f=\kappa\}|\le 1$.
\end{proposition}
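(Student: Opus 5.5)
The argument is an induction on the sequence of reader-visible states. Because commits carrying $\kappa$ are serial and atomic, readers only ever see a finite chain of states $\sigma_0,\sigma_1,\dots$: $\sigma_0$ is the initial store and $\sigma_{j+1}$ results from applying the $(j+1)$-th committed update to $\sigma_j$. No reader can observe a partially applied commit. The invariant is that the number of active facts with key $\kappa$ is at most one:
\[
n_j=\bigl|\{f\in\mathcal{A}(\sigma_j):\kappa_f=\kappa\}\bigr|\le 1 .
\]
The base case $n_0\le 1$ is exactly the hypothesis on the initial store.

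For the inductive step, fix a commit of a new fact $f'$ with key $\kappa$ at time $u$. I split into two cases according to $n_j$.
\begin{itemize}
\item If $n_j=0$, the commit only adds $f'$ with $[a_{f'},b_{f'})=[u,\infty)$. Afterwards $f'$ is the sole active fact with key $\kappa$, so $n_{j+1}=1$.
\item If $n_j=1$, let $f$ be the unique active predecessor. The rule sets $b_f=u$, so $f$ is no longer active at any $\tau\ge u$. It stays in $\mathcal{M}$ only as history. The only newly active fact with key $\kappa$ is $f'$, hence again $n_{j+1}=1$.
\end{itemize}
Commits carrying other keys $\kappa''\neq\kappa$ never create or close a fact with key $\kappa$ under the rule, so they leave $n_j$ unchanged.

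The main obstacle is reconciling the activity predicate with time. $\mathcal{A}_t$ is defined by $a_f\le\tau_t<b_f$, so I must check that a reader whose timestamp falls between commits cannot see two overlapping intervals. Two points handle this. First, receipt-order application makes commit times nondecreasing along the serial chain. Second, the half-open convention means that at $\tau=u$ the closed predecessor is excluded ($b_f=u\le\tau$) while $f'$ is included ($a_{f'}=u\le\tau$). Any reader at time $\tau$ sees the state produced by all commits with time at most $\tau$. In that state, the intervals of $\kappa$-facts created after the initial one are pairwise disjoint, since each interval is closed at exactly the start of the next. This disjointness is the strengthened invariant I would carry through the induction instead of the bare count.

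Finally, I need to rule out reopening a closed fact. The rule only ever sets $b$ from $\infty$ to a finite value and never increases it. Historical facts therefore stay inactive at all later times, and the bound $\le 1$ holds in every reader-visible state.
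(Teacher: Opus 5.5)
Your proposal is correct and follows essentially the same route as the paper's proof. Both argue by induction over the serial, atomic commits carrying $\kappa$, split on whether an active $\kappa$-fact exists, and use the half-open convention so that the closed predecessor is inactive from $u$ onward while the new fact is active. The paper also mentions that pruning never removes a key's active fact, which you leave out; removal can only lower the count, so your bound is unaffected, and your disjointness strengthening is unnecessary but harmless.
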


Appendix~\ref{app:proofs} gives the proof. The proposition is a state invariant, not a correctness guarantee. Under out-of-order arrival, the last value to arrive remains active rather than the value stated latest in event time. A missed merge assigns two revisions different keys and leaves both active; a false merge assigns a revision to the wrong slot key and can remove a current value. Sections~\ref{sec:keynoise} and~\ref{sec:real} measure both failures.

\section{A Reference Memory Layer}
\label{sec:pfm}

We use Personal Fact Memory (PFM) as a reference layer because its construction and serving decisions are explicit and can be held fixed across multiple comparisons. PFM implements the keyed-supersession rule above, with the update decision made by key comparison at commit time rather than by a model call. We use this established mechanism as the state-management component of the reference layer; Appendix~\ref{app:config} specifies the scoring rule and fixed parameters used in the experiments.

An extractor converts each observation into fact drafts with text, entities, and an optional slot key. On the controlled benchmark, a rule assigns a key when the sentence names a gazetteer entity, a slot noun, and a value of that slot's type. The public corpora in Section~\ref{sec:real} provide no such slot inventory, so their facts begin as sentences and receive keys from a model when a keyed arm is evaluated. A keyed draft that repeats the active value is a re-observation; a different value supersedes it. A keyless draft is absorbed only when an active fact sharing an entity has token Jaccard similarity of at least $0.8$ and the same value-bearing tokens. Superseded facts leave the retrieval index but remain as history, and pruning exempts the active fact of every key. Serving ranks active facts with BM25F~\citep{robertson2004simple,robertson2009probabilistic} over text, entity, and participant fields, together with recency, usage, and a one-hop association term, then takes facts in score order until $k$ or $B$ binds. No embedding model or language model runs on this path. A background worker can instead publish an immutable snapshot for request-time reads; Appendix~\ref{app:serving} states the conditional waiting bound and measures the handoff and scheduling terms it requires.

\section{Identity Resolution and Abstention}
\label{sec:identity}

Supersession determines which stored value of a keyed slot is active. It does not determine which entity an ambiguous request refers to or whether the store contains an answer. We treat those as serving decisions over the retrieved candidate set.

A request token is an ambiguous \emph{mention} when it matches more than one stored entity, as a first name can. For the candidate set $\mathcal{E}_t$, we score each entity and normalize the scores into a posterior:
\begin{equation}
\psi_t(e)=w_p\,\mathbf{1}\{e\in P_t\}+w_g\!\!\sum_{p\in P_t}\!\frac{w_{pe}(\tau_t)}{\sum_v w_{pv}(\tau_t)}+w_r\,2^{-(\tau_t-s_e)/T_{\mathrm{rec}}},
\qquad
\pi_t(e)=\frac{e^{\psi_t(e)/T}}{\sum_{e'\in\mathcal{E}_t}e^{\psi_t(e')/T}},
\label{eq:identity}
\end{equation}
where $s_e$ is the most recent active fact about $e$, $w=(1,1,0.25)$, and $T=0.5$. The three terms favor an entity that is the current interlocutor, is associated with the interlocutor in the memory graph, or has been observed recently. Retrieval retains facts about $\arg\max_e\pi_t(e)$ and facts that refer to none of the candidates. A hard participant filter is the limiting case that keeps only the interlocutor; it removes ambiguity when facts concern that person but can discard facts about third parties.

Abstention requires a separate decision. The slot gate identifies the requested slot from its noun and trailing cue and returns an empty block unless a retrieved fact states that slot for an entity with $\pi_t(e)\ge\tau$. The \emph{margin gate} requires no slot inventory and returns an empty block unless the top fact scores at least $\tau$ times the shortlist mean and shares one third of the request's content terms. The slot gate is more informative but depends on a known schema; Appendix~\ref{app:identity} evaluates both under answerable and unanswerable requests.

\section{Experimental Setup}
\label{sec:setup}

\paragraph{Corpora.} The controlled benchmark contains 160 revision chains per instance over person or project slots, with zero to four revisions per chain and 1{,}000 filler messages. Queries are unfinished message prefixes and carry the chain's participant as $P_t$. People are named by first name in the request, so chains on the same slot for people sharing a first name create wrong-person competitors; 40 to 50 of the 240 queries per instance are ambiguous. Five independently seeded instances yield 1{,}200 queries. Values are unique within an instance, which makes prompt exposure measurable by whole-token containment. We also use two public corpora that remove structure supplied by the generator. LongMemEval~\citep{wu2024longmemeval} provides knowledge-update questions with earlier and later answer-bearing turns, allowing us to study whether open-domain key assignment causes supersession to fire. LoCoMo~\citep{maharana2024evaluating} contains multiple conversations with speakers sharing the same first name, which lets us measure wrong-person exposure by provenance; per-conversation stores provide unanswerable requests. Appendices~\ref{app:bench} and~\ref{app:real} give the construction and filtering details.

\paragraph{Arms and inference.} Every retrieval arm uses the same store, greedy selection rule, and memory-block format with $k=6$ and $B=800$ ($B=1200$ on the public corpora). BM25 uses text only; ``+ validity'' restricts retrieval to active facts; ``+$P_t$'' supplies participant names to both the query and fact text; Dense uses all-MiniLM-L6-v2; and Hybrid fuses BM25 and Dense shortlists by reciprocal rank. Queries from the same chain share labels, and chains within an instance share a benchmark seed. Confidence intervals therefore use a two-level bootstrap over seeds and chains; rates of zero or one use Wilson intervals over chains. Paired comparisons use a chain-level permutation test. When we claim equivalence, we use two one-sided tests against a clean-retrieval margin fixed in advance at $\pm0.02$~\citep{schuirmann1987comparison} rather than treating a nonsignificant difference as evidence of equivalence.

\section{Results}

\subsection{Supersession and Retrieval Ranking}
\label{sec:temporal}

\begin{table}[t]
\centering
\small
\setlength{\tabcolsep}{4pt}
\begin{tabular}{@{}llrrrrr@{}}
\toprule
Store & $B$ & Current $\uparrow$ & Stale $\downarrow$ & Wrong-person $\downarrow$ & Co-injection $\downarrow$ & Clean $\uparrow$ \\
\midrule
Keyed & 800 & 0.998 {\scriptsize(0.993, 1.000)} & \cellcolor{cPos!60}0.000 {\scriptsize(0.000, 0.005)} & 0.121 {\scriptsize(0.094, 0.149)} & 0.000 {\scriptsize(0.000, 0.005)} & \cellcolor{cPos!60}0.877 {\scriptsize(0.850, 0.904)} \\
Keyless & 800 & 0.950 {\scriptsize(0.923, 0.971)} & 0.703 {\scriptsize(0.654, 0.749)} & 0.004 {\scriptsize(0.000, 0.012)} & 0.701 {\scriptsize(0.652, 0.746)} & 0.247 {\scriptsize(0.210, 0.284)} \\
\midrule
Keyed & 240 & 0.869 {\scriptsize(0.836, 0.903)} & 0.000 {\scriptsize(0.000, 0.005)} & 0.000 {\scriptsize(0.000, 0.005)} & 0.000 {\scriptsize(0.000, 0.005)} & 0.869 {\scriptsize(0.836, 0.903)} \\
Keyless & 240 & 0.823 {\scriptsize(0.791, 0.852)} & 0.446 {\scriptsize(0.396, 0.501)} & 0.000 {\scriptsize(0.000, 0.005)} & 0.438 {\scriptsize(0.388, 0.493)} & 0.385 {\scriptsize(0.331, 0.435)} \\
\bottomrule
\end{tabular}
\caption{Prompt-level effects of keyed supersession. Each entry is the share of 1{,}200 queries whose memory block contains the current value, a superseded value of the same slot (stale), a same-named wrong person's value, both current and stale values (co-injection), or the current value with neither error (clean). $B$ is the character budget. Parentheses report 95\% intervals; for zero observed stale exposure, the Wilson upper bound over 800 chains is 0.005. Shading marks the focal keyed comparisons.}
\label{tab:temporal}
\end{table}

\begin{table}[t]
\centering
\small
\setlength{\tabcolsep}{4pt}
\begin{tabular}{@{}lrrrrrr@{}}
\toprule
Method & Current $\uparrow$ & Stale $\downarrow$ & Wrong-person $\downarrow$ & Clean $\uparrow$ & p50 (ms) $\downarrow$ & p99 (ms) $\downarrow$ \\
\midrule
PFM & 0.998 & 0.000 & 0.121 & \cellcolor{cPos!60}0.877 {\scriptsize(0.850, 0.904)} & 0.25 & 0.69 \\
BM25 + validity & 0.891 & 0.000 & 0.120 & 0.774 {\scriptsize(0.737, 0.809)} & 0.09 & 0.26 \\
BM25 + validity + $P_t$ & 1.000 & 0.000 & 0.128 & \cellcolor{cPos!60}0.873 {\scriptsize(0.847, 0.896)} & 0.10 & 0.29 \\
Dense + validity + $P_t$ & 1.000 & 0.000 & 0.164 & 0.836 {\scriptsize(0.803, 0.868)} & 18.3 & 93.0 \\
Hybrid + validity + $P_t$ & 1.000 & 0.000 & 0.157 & 0.843 {\scriptsize(0.813, 0.872)} & 18.2 & 82.4 \\
\midrule
BM25 + $P_t$ (no validity) & 0.997 & 0.802 & 0.007 & 0.195 {\scriptsize(0.164, 0.227)} & 0.18 & 0.46 \\
\bottomrule
\end{tabular}
\caption{Retrieval arms under the same stores, selection rule, and prompt budget ($k{=}6$, $B{=}800$). Columns follow Table~\ref{tab:temporal}. ``+ validity'' restricts retrieval to active facts; ``+$P_t$'' supplies participant names to the query and fact text. The final row is the strongest validity-agnostic arm; Appendix~\ref{app:full} reports all arms. Parentheses report 95\% intervals for clean retrieval. Shading marks the equivalence comparison.}
\label{tab:baselines}
\end{table}

The keyed and keyless stores receive identical fact drafts; the keyless store simply ignores their keys. None of the 1{,}200 keyed prompts contains a superseded value, which bounds stale exposure at 0.005 rather than establishing a population rate of zero. In the keyless store, 70.3\% of prompts expose a stale value, usually beside the current one. The benchmark deliberately revises 80\% of its chains. Reweighting to a store in which 70\% of facts are never revised leaves keyed PFM at 0.877 clean retrieval and raises the keyless store to 0.603, reducing the gap without changing its direction. The keyed store's remaining errors are wrong-person exposures: all 145 occur on the 223 ambiguous queries, and in 94 of those the correct fact ranks first while a same-name competitor fills a later position. Identity therefore depends on the composition of the full block, not only the top-ranked fact.

Restricting every ranker to the same active store isolates temporal validity from ranking. All validity-aware arms have zero observed stale exposure, whereas validity-agnostic arms with participant information expose stale values on 78\% to 80\% of queries. Dense retrieval recalls the current value on almost every query but also retrieves obsolete versions because they remain semantically close to the request. Once the active store and participant information are shared, BM25 + validity + $P_t$ reaches 0.873 clean retrieval against 0.877 for PFM. The paired difference is 0.005 with a 90\% interval from $-0.008$ to 0.019, which lies inside the prespecified $\pm0.02$ equivalence margin ($p_{\mathrm{TOST}}=0.038$); the permutation test alone gives $p=0.55$. Removing participant information lowers the same BM25 arm to 0.774, and the ablation in Table~\ref{tab:ablation} attributes the resulting gap to participant signals rather than to the entity field, graph, usage term, recency, or term cap. Dense and hybrid retrieval with validity remain below the lexical arms because of wrong-person exposure, not current-value recall. Their query encoding also rarely clears a 10~ms serving deadline, with MiniLM taking about 18~ms at the median (Figure~\ref{fig:deadline}).

\subsection{Sensitivity to Key Assignment}
\label{sec:keynoise}

Proposition~\ref{prop:one_active} assumes that revisions of a slot share a key. We violate that premise directly after extraction. With missed merges at rate $p$, stale exposure rises from 0.100 at $p{=}0.05$ to 0.370 at $p{=}0.2$ and 0.592 at $p{=}0.4$, approaching the keyless store's 0.703. False merges produce a different failure: they remove the current value from the active view. At $q{=}0.05$, 6.1\% of queries have lost their current value; at $q{=}0.2$, 22.5\% have, and clean retrieval falls to 0.553 (Table~\ref{tab:keynoise}). Key granularity and arrival order create the same asymmetry. A key that is too coarse merges a person's standup with their sync and reduces current-value recall to 0.500, while delivering 30\% of chains out of statement order lowers clean retrieval by 0.130; ordering those updates by stated time restores the in-order result (Appendix~\ref{app:slices}).

\paragraph{Keys from a model.}\label{sec:extractor} To separate key assignment from the benchmark templates, qwen2:7b rewrites every chain message in three benchmark instances while preserving names and values; 1{,}012 of 1{,}268 rewrites pass the acceptance check and the remainder retain the template wording. Every key assigner sees the same rewritten corpus. We compare the rule extractor of Section~\ref{sec:pfm}, four model-based key assigners ranging from 3B to 14B, a Mem0-shaped model decision over the five most similar active facts~\citep{chhikara2025mem0}, and a keyless store.

Key recall alone does not predict prompt quality. Rewriting lowers the rule extractor's key recall to 0.807 and leaves clean retrieval at 0.781. Each model-based assigner has higher key recall, from 0.980 to 0.996, yet clean retrieval falls between 0.560 and 0.644; every difference from the rule extractor excludes zero under the chain-cluster bootstrap (Table~\ref{tab:extractor}). The best model by clean retrieval, qwen2.5:7b, reaches key precision of 0.990 but remains 0.136 below the rule extractor in clean retrieval (CI 0.086 to 0.186). Model size does not order the results: qwen2.5:14b reaches 0.593 clean retrieval against 0.644 for the 7B model from the same generation.

The error taxonomy explains the reversal. The rule extractor misses 245 keys but never assigns the wrong slot. The four models miss only 5 to 25 keys but misfile 13 to 93 messages, always to the wrong slot of the correct entity. Three models most often confuse room with meeting, reflecting overlapping slot vocabulary in the benchmark (Table~\ref{tab:key_errors}). A missed key leaves both values active and therefore visible in the prompt; a wrong-slot key can close an unrelated current fact and delete the answer silently. In this setting, the precision of slot assignment matters more than revision recall. The public-corpus study tests whether a model can preserve the same key vocabulary when the slot inventory is not supplied.

\subsection{Identity Resolution and Abstention}
\label{sec:idresults}

\begin{table}[t]
\centering
\small
\setlength{\tabcolsep}{4pt}
\begin{tabular}{@{}lrrrrrr@{}}
\toprule
 & \multicolumn{3}{c}{Main benchmark} & Third party & \multicolumn{2}{c}{Hard slice} \\
\cmidrule(lr){2-4}\cmidrule(lr){5-5}\cmidrule(l){6-7}
Serving rule & Wrong-person $\downarrow$ & Clean $\uparrow$ & Abstained $\downarrow$ & Recall $\uparrow$ & Clean $\uparrow$ & Abstained $\uparrow$ \\
\midrule
none & 0.121 & 0.877 & 0.000 & 0.985 & 0.000 & 0.000 \\
relative cutoff @0.7 & 0.008 & 0.983 & 0.000 & 0.975 & 0.908 & 0.000 \\
participant filter & 0.000 & 0.998 & 0.000 & 0.020 & 1.000 & 0.000 \\
identity posterior & 0.032 & 0.967 & 0.000 & \cellcolor{cPos!60}0.985 & \cellcolor{cPos!60}1.000 & 0.000 \\
\quad + slot gate & 0.032 & 0.967 & 0.002 & 0.985 & 1.000 & \cellcolor{cPos!60}1.000 \\
margin gate & 0.097 & 0.743 & 0.158 & 0.815 & 0.000 & 0.000 \\
\bottomrule
\end{tabular}
\caption{Identity resolution and abstention across three query sets: the main benchmark (1{,}200 queries), queries about a third party rather than the interlocutor (200), and a hard same-name slice (120 answerable and 60 unanswerable requests). Wrong-person and clean follow Table~\ref{tab:temporal}. Recall indicates that the current value is present; abstained indicates an empty block. Stale exposure is zero in every row. Shading marks the focal comparisons.}
\label{tab:identity}
\end{table}

A hard participant filter removes all wrong-person exposure on the main benchmark but recalls only 2.0\% of third-party facts. The posterior in Eq.~(\ref{eq:identity}) reduces wrong-person exposure from 0.121 to 0.032 and raises clean retrieval to 0.967 (paired difference 0.089, CI 0.071 to 0.110) while preserving current-value recall at 0.998 overall and 0.985 on third-party facts. On the hard slice, where same-named people have facts with the same slot, noun, template, and arrival day, both the filter and posterior remove wrong-person exposure; a relative score cutoff leaves 0.092. The graph term matters here even though it does not improve ranking in Table~\ref{tab:ablation}: it allows an entity discussed with the current interlocutor to remain eligible even when that entity is not the interlocutor. The hard filter is preferable when only facts about the interlocutor are needed; the posterior preserves the broader third-party use case.

Abstention depends more strongly on schema. On 60 unanswerable requests, every ungated ranker, validity filter, and identity rule returns another person's value of the requested type. The slot gate abstains on all 60 and on only 0.002 of answerable main-benchmark requests, but it relies on the benchmark's six-slot inventory. The schema-free margin gate abstains on none of the 60 unanswerable requests and on 15.8\% of answerable benchmark requests, lowering clean retrieval by 0.134. Query-time model baselines do not remove this dependence (Appendix~\ref{app:systems}). The temporal reranker reduces stale exposure but also discards many current values, the abstention judge catches only two thirds of unanswerable requests, and the model-based entity resolver reaches 0.931 clean retrieval against 0.969 for the posterior while requiring a request-time model call on ambiguous mentions.

\subsection{Evaluation on Public Corpora}
\label{sec:real}

\begin{table}[t]
\centering
\small
\setlength{\tabcolsep}{4pt}
\begin{tabular}{@{}lrrrrrr@{}}
\toprule
 & Both & Merged & Merge & Cross- & & \\
Key assignment & keyed $\uparrow$ & $\mid$ both $\uparrow$ & recall $\uparrow$ & question $\downarrow$ & Stale $\downarrow$ & Clean $\uparrow$ \\
\midrule
stateless, qwen2:7b & 0.438 & 0.000 & 0.000 & 0.042 & 0.688 & 0.219 \\
\quad + store in the prompt & 0.500 & 0.125 & \cellcolor{cPos!60}0.062 & 0.294 & 0.594 & 0.250 \\
\quad + model-judged linking & 0.438 & 0.071 & 0.031 & 0.143 & 0.719 & 0.188 \\
\quad + Jaccard linking, $\theta{=}0.6$ & 0.438 & 0.071 & 0.031 & 0.087 & 0.719 & 0.188 \\
\quad + Jaccard linking, $\theta{=}0.4$ & 0.438 & 0.143 & \cellcolor{cPos!60}0.062 & 0.140 & 0.688 & 0.219 \\
\midrule
stateless, qwen2.5:7b & 0.219 & 0.143 & 0.031 & 0.012 & 0.688 & 0.188 \\
\quad + model-judged linking & 0.219 & 0.286 & \cellcolor{cPos!60}0.062 & 0.029 & 0.719 & 0.156 \\
\midrule
no update resolution & -- & -- & -- & -- & 0.719 & 0.156 \\
BM25 & -- & -- & -- & -- & 0.812 & 0.156 \\
BM25 + recency & -- & -- & -- & -- & 0.688 & 0.250 \\
\bottomrule
\end{tabular}
\caption{Open-domain key assignment on 32 LongMemEval knowledge-update questions over 2{,}369 turns. Both keyed is the share of questions whose two answer-bearing turns both receive keys; merged $\mid$ both is the conditional share receiving the same key; merge recall is their product; cross-question is the share of keys reused across different questions. Stale and clean follow Table~\ref{tab:temporal}; their intervals are about $\pm0.16$ and do not separate the arms. Shading marks the largest observed merge-recall values.}
\label{tab:lme}
\end{table}

The controlled benchmark supplies a fixed slot vocabulary and consistent entity names. LongMemEval removes that structure. Of its 78 knowledge-update questions, 32 satisfy the requirement that both earlier and later answer spans are verbatim and distinct, allowing prompt-level stale exposure to be labeled exactly.

Stateless key assignment almost never reuses a key across the two turns that define a knowledge update. For qwen2:7b, 1{,}198 turns receive keys but those keys contain 1{,}025 distinct normalized forms, and none of the 32 questions assigns the same key to both answer-bearing turns. Exposing existing store keys to the assigner increases merge recall only to 0.062 and raises cross-question merging from 0.042 to 0.294. Post-extraction model linking and Jaccard linking produce the same tradeoff: conditional merging improves modestly, but false merging also rises. Across all variants, merge recall is at most 0.062, while stale exposure remains between 0.594 and 0.719. BM25 with recency, which requires no keys, reaches 0.250 clean retrieval against 0.156 with no update resolution. On this subset of LongMemEval, the limiting factor is therefore consistent open-domain key discovery rather than the supersession rule itself.

LoCoMo exposes a different limitation. In a single store containing all ten conversations, 0.543 of prompts for questions about a speaker whose first name is shared contain a fact from another such speaker's conversation. A participant filter lowers this exposure to 0.301 without recall loss on this corpus because LoCoMo questions concern the speakers themselves. The posterior in Eq.~(\ref{eq:identity}) cannot apply when two stored people have the identical normalized string \texttt{john}; there are no distinct candidate entities for it to choose between. A graph-affinity substitute reduces the share of off-conversation lines but leaves name-sharing exposure unchanged. The schema-free margin gate likewise trades abstention for recall: at $\tau=1.5$ it catches 95.0\% of 1{,}800 unanswerable requests but abstains on 73.9\% of answerable ones, while the slot gate cannot run without a slot inventory (Appendix~\ref{app:locomo}).

\subsection{Response-Level Effects}
\label{sec:llm}

Prompt exposure is an intermediate outcome, so we also pass all 240 seed-7 queries through two frozen models, qwen2:7b and llama3.2:3b (4-bit, greedy), once per memory arm (Tables~\ref{tab:llm},~\ref{tab:llm_llama}). In the keyless store, 154 prompts contain a superseded value; the 7B model repeats one in 45 completions and the 3B model in 67. With keyed memory, 92.9\% of 7B completions contain the current value, compared with 70.4\% under the keyless store (paired difference 0.225, CI 0.161 to 0.296). For the 3B model, the corresponding rates are 0.912 and 0.592 (difference 0.321, CI 0.246 to 0.402). Telling the model explicitly to use the most recent dated memory line raises BM25 correctness from 0.629 to 0.654, still below 0.842 when BM25 retrieves only active facts. Appendix~\ref{app:llm} reproduces the qualitative gap under key noise, out-of-order arrival, hard identity collisions, and sampled decoding. The memory block also dominates end-to-end latency on this hardware: median time to first token rises from 325~ms without memory to 1.22~s with the block, roughly three orders of magnitude more than the lexical retrieval that produced it.

\section{Limitations}

\paragraph{Data.} The controlled benchmark uses a small template family, globally unique value tokens, correct participant metadata, and a known slot inventory. These choices make prompt exposure identifiable but simplify the language and structure faced by the memory system. The LongMemEval study is also selective: key assignment requires a model call per ingested turn, so we use 2{,}587 turns rather than the full corpus, and the verbatim-span requirement leaves 32 of 78 knowledge-update questions. Questions answered only by paraphrase are therefore excluded, which favors string-based labeling. On LoCoMo, lexical sentence retrieval rarely places the gold answer in the block, so we use that corpus to study exposure by provenance rather than to estimate answer recall. Neither public corpus represents a real user's personal store, and none of the experiments measures user benefit.

\paragraph{Scope of the claims.} Proposition~\ref{prop:one_active} guarantees an update invariant, not semantic correctness. The stale-exposure results are conditional on assigning revisions to the correct slot and entity. Our open-domain experiments test several plausible key-assignment and linking procedures, including exposing existing store keys to the assigner, but they do not rule out a system trained specifically for consistent key discovery or a resolver operating over a richer schema. The identity posterior applies only when competing people are represented as distinct stored entities, and the abstention rule that succeeds on the controlled benchmark assumes a known slot inventory. The serving bound in Proposition~\ref{prop:waiting} is conditional on runtime handoff and scheduling delays; all latency measurements come from one machine and one CPython implementation. Finally, the response-level study uses two small quantized models and a containment-based judge.

\section{Conclusion}

Prompt-level evaluation separates failures that final-answer accuracy can conflate. In our controlled experiments, temporal validity is determined primarily by whether the write path maintains the correct active state: once all rankers receive the same active store and participant information, simple lexical retrieval is equivalent to the reference ranker within the prespecified margin. The guarantee is only as useful as the key assignment that activates it. Missed merges leave obsolete values visible, while false merges silently remove current values; model-based key assigners can achieve high revision recall and still produce worse prompts when slot precision is imperfect. On LongMemEval, consistent open-domain key reuse remains rare even when existing keys are made available to the assigner.

Misattribution and abstention remain separate serving problems. Participant filtering can remove identity errors when the desired fact concerns the interlocutor, but it discards third-party facts; the entity posterior preserves those facts on controlled data and fails when distinct people share the same stored name. A schema-aware gate can abstain reliably, whereas a schema-free margin rule does not transfer to LoCoMo. A useful evaluation therefore begins at the point where memory enters the model: whether the prompt contains a current fact for the intended entity, whether the system declines when no answer is stored, and whether the block arrives in time. Final-answer accuracy remains necessary, but by itself it does not identify which part of the memory system failed.

\section*{Ethics Statement}
\label{sec:ethics}

Personal memory can contain messages, commitments, and information about people other than the user. Supersession retains outdated values as history for provenance, but a deployed system must treat deletion separately: retiring a fact from retrieval is not equivalent to erasing the stored fact or the graph associations derived from it. The reference implementation does not implement complete deletion, so it should not be interpreted as satisfying a deletion request. Wrong-person exposure also creates a cross-context disclosure risk because a fact from one relationship can enter a prompt addressed to another person with the same name. The identity rules studied here are ranking rules, not access-control mechanisms. A deployment that prohibits cross-person disclosure requires an explicit policy enforced when the memory block is assembled. All controlled experiments use synthetic text and invented names. The LongMemEval and LoCoMo analyses use public benchmark corpora; no private user data or human-subject data are collected in this study.

\section*{Reproducibility Statement}

The anonymized repository contains the reference implementation, benchmark generator, public-corpus loaders, fixed configuration, unit tests for the update invariants, cached model calls, per-query logs, and the materials needed to reproduce the reported tables and figures. Model-assisted labels are cached by prompt so the public-corpus analyses can be reproduced without rerunning model inference. Appendix~\ref{app:config} specifies all parameters and seeds; Appendix~\ref{app:bench} documents benchmark construction and counts; Appendix~\ref{app:real} describes public-corpus construction and label auditing; Appendix~\ref{app:prompt} gives the memory-block and response prompts; and Appendix~\ref{app:timing} reports the timing protocol, hardware, and software versions. LongMemEval and LoCoMo are obtained from their original public sources and are not redistributed.

\section*{Use of Large Language Models}

LLM-based assistants were used for code implementation and language editing. The authors designed the experiments, reviewed the resulting code, verified every reported result, and take responsibility for the manuscript.

Open-weight language models also serve as objects of study and experimental components. They generate completions in Section~\ref{sec:llm}, assign keys in Sections~\ref{sec:extractor} and~\ref{sec:real}, and extract verbatim answer spans for the LongMemEval labels. The latter use is restricted to spans copied from annotated turns; spans that fail the verbatim check are discarded, and Appendix~\ref{app:real} documents the filtering and label audit. This restriction prevents model-generated content from being treated as ground truth.

\bibliography{iclr2027_conference}

@inproceedings{park2023generative,
  title={Generative agents: Interactive simulacra of human behavior},
  author={Park, Joon Sung and O'Brien, Joseph and Cai, Carrie Jun and Morris, Meredith Ringel and Liang, Percy and Bernstein, Michael S},
  booktitle={Proceedings of the 36th annual acm symposium on user interface software and technology},
  pages={1--22},
  year={2023}
}

@article{chhikara2025mem0,
  title={Mem0: Building production-ready ai agents with scalable long-term memory},
  author={Chhikara, Prateek and Khant, Dev and Aryan, Saket and Singh, Taranjeet and Yadav, Deshraj},
  journal={arXiv preprint arXiv:2504.19413},
  year={2025}
}

@article{rasmussen2025zep,
  title={Zep: a temporal knowledge graph architecture for agent memory},
  author={Rasmussen, Preston and Paliychuk, Pavlo and Beauvais, Travis and Ryan, Jack and Chalef, Daniel},
  journal={arXiv preprint arXiv:2501.13956},
  year={2025}
}

@article{gutierrez2024hipporag,
  title={Hipporag: Neurobiologically inspired long-term memory for large language models},
  author={Guti{\'e}rrez, Bernal J and Shu, Yiheng and Gu, Yu and Yasunaga, Michihiro and Su, Yu},
  journal={Advances in neural information processing systems},
  volume={37},
  pages={59532--59569},
  year={2024}
}

@article{yadav2026temporal,
  title={Temporal Validity in Retrieval Memory: Eliminating Stale-Fact Errors for AI Agents over Evolving Knowledge},
  author={Yadav, Neeraj},
  journal={arXiv preprint arXiv:2606.26511},
  year={2026}
}

@article{wang2026toki,
  title={TOKI: A Bitemporal Operator Algebra for Contradiction Resolution in LLM-Agent Persistent Memory},
  author={Wang, Ziming},
  journal={arXiv preprint arXiv:2606.06240},
  year={2026}
}

@article{xu2026mem,
  title={A-mem: Agentic memory for llm agents},
  author={Xu, Wujiang and Liang, Zujie and Mei, Kai and Gao, Hang and Tan, Juntao and Zhang, Yongfeng},
  journal={Advances in Neural Information Processing Systems},
  volume={38},
  pages={17577--17604},
  year={2026}
}

@article{packer2023memgpt,
  title={MemGPT: towards LLMs as operating systems.},
  author={Packer, Charles and Fang, Vivian and Patil, Shishir\_G and Lin, Kevin and Wooders, Sarah and Gonzalez, Joseph\_E},
  year={2023},
  publisher={ArXiv}
}

@inproceedings{zhong2024memorybank,
  title={Memorybank: Enhancing large language models with long-term memory},
  author={Zhong, Wanjun and Guo, Lianghong and Gao, Qiqi and Ye, He and Wang, Yanlin},
  booktitle={Proceedings of the AAAI conference on artificial intelligence},
  volume={38},
  number={17},
  pages={19724--19731},
  year={2024}
}

@inproceedings{xia2026memora,
  title={Memora: A harmonic memory representation balancing abstraction and specificity},
  author={Xia, Menglin and Zhang, Xuchao and Dixit, Shantanu and Harimurugan, Paramaguru and Wang, Rujia and R{\"u}hle, Victor and Sim, Robert and Bansal, Chetan and Rajmohan, Saravan},
  booktitle={Forty-third International Conference on Machine Learning},
  year={2026}
}

@article{zhou2026we,
  title={Are We Ready For An Agent-Native Memory System?},
  author={Zhou, Wei and Zhou, Xuanhe and Han, Shaokun and Xu, Hongming and Li, Guoliang and Li, Zhiyu and Xiong, Feiyu and Wu, Fan},
  journal={arXiv preprint arXiv:2606.24775},
  year={2026}
}

@inproceedings{banerjee2026apex,
  title={APEX-MEM: Agentic Semi-Structured Memory with Temporal Reasoning for Long-Term Conversational AI},
  author={Banerjee, Pratyay and Moshtaghi, Masud and Subramanian, Shivashankar and Misra, Amita and Chadha, Ankit},
  booktitle={Proceedings of the 64th Annual Meeting of the Association for Computational Linguistics (Volume 1: Long Papers)},
  pages={16470--16489},
  year={2026}
}

@article{su2026beyond,
  title={Beyond Dialogue Time: Temporal Semantic Memory for Personalized LLM Agents},
  author={Su, Miao and Guo, Yucan and Hou, Zhongni and Bai, Long and Li, Zixuan and Zhang, Yufei and Yin, Guojun and Lin, Wei and Jin, Xiaolong and Guo, Jiafeng and others},
  journal={arXiv preprint arXiv:2601.07468},
  year={2026}
}

@article{jensen1999temporal,
  title={Temporal data management},
  author={Jensen, Christian S and Snodgrass, Richard T},
  journal={IEEE Transactions on knowledge and data engineering},
  volume={11},
  number={1},
  pages={36--44},
  year={1999},
  publisher={IEEE}
}

@inproceedings{mckenney1998read,
  title={Read-copy update: Using execution history to solve concurrency problems},
  author={McKenney, Paul E and Slingwine, John D},
  booktitle={Parallel and Distributed Computing and Systems},
  volume={509518},
  pages={509--518},
  year={1998}
}

@book{gospodnetic2010lucene,
  title={Lucene in action},
  author={Gospodnetic, Otis and Hatcher, Erik and McCandless, Michael},
  year={2010},
  publisher={Simon and Schuster}
}

@article{zhao2025ame,
  title={AME: An Efficient Heterogeneous Agentic Memory Engine for Smartphones},
  author={Zhao, Xinkui and Ma, Qingyu and Zhang, Yifan and Lou, Hengxuan and Cheng, Guanjie and Deng, Shuiguang and Yin, Jianwei},
  journal={arXiv preprint arXiv:2511.19192},
  year={2025}
}

@article{yuan2026agentir,
  title={AgentIR: A Workload-Adaptive Cascade Retrieval Substrate for Long-Term Conversational Memory},
  author={Yuan, Aojie and Zhang, Haiyue and Nazarian, Shahin},
  journal={arXiv preprint arXiv:2605.25092},
  year={2026}
}

@inproceedings{maharana2024evaluating,
  title={Evaluating very long-term conversational memory of llm agents},
  author={Maharana, Adyasha and Lee, Dong-Ho and Tulyakov, Sergey and Bansal, Mohit and Barbieri, Francesco and Fang, Yuwei},
  booktitle={Proceedings of the 62nd Annual Meeting of the Association for Computational Linguistics (Volume 1: Long Papers)},
  pages={13851--13870},
  year={2024}
}

@article{wu2024longmemeval,
  title={Longmemeval: Benchmarking chat assistants on long-term interactive memory},
  author={Wu, Di and Wang, Hongwei and Yu, Wenhao and Zhang, Yuwei and Chang, Kai-Wei and Yu, Dong},
  journal={arXiv preprint arXiv:2410.10813},
  year={2024}
}

@inproceedings{hu2026evaluating,
  title={Evaluating memory in llm agents via incremental multi-turn interactions},
  author={Hu, Yuanzhe and Wang, Yu and McAuley, Julian},
  booktitle={International Conference on Learning Representations},
  volume={2026},
  pages={156259--156291},
  year={2026}
}

@inproceedings{uddin2026recall,
  title={From recall to forgetting: Benchmarking long-term memory for personalized agents},
  author={Uddin, Md Nayem and Shubham, Kumar and Blanco, Eduardo and Baral, Chitta and Wang, Gengyu},
  booktitle={Findings of the Association for Computational Linguistics: ACL 2026},
  pages={26814--26841},
  year={2026}
}

@article{chao2026stale,
  title={STALE: Can LLM Agents Know When Their Memories Are No Longer Valid?},
  author={Chao, Hanxiang and Bai, Yihan and Sheng, Rui and Li, Tianle and Sun, Yushi},
  journal={arXiv preprint arXiv:2605.06527},
  year={2026}
}

@article{tao2026memconflict,
  title={MemConflict: Evaluating Long-Term Memory Systems Under Memory Conflicts},
  author={Tao, Zhen and Zhao, Jinxiang and Liu, Peng and Xi, Dinghao and Chen, Yanfang and Xu, Wei and Li, Zhiyu},
  journal={arXiv preprint arXiv:2605.20926},
  year={2026}
}

@article{long2026memtrace,
  title={MemTrace: Probing What Final Accuracy Misses in Long-Term Memory},
  author={Long, Xianxuan and Chen, Zhikai and Zeng, Shenglai and Wang, Shouren and Guo, Kai and Tang, Jiliang},
  journal={arXiv preprint arXiv:2606.17328},
  year={2026}
}

@book{robertson2009probabilistic,
  title={The probabilistic relevance framework: BM25 and beyond},
  author={Robertson, Stephen and Zaragoza, Hugo},
  volume={4},
  year={2009},
  publisher={Now Publishers Inc}
}

@inproceedings{robertson2004simple,
  title={Simple BM25 extension to multiple weighted fields},
  author={Robertson, Stephen and Zaragoza, Hugo and Taylor, Michael},
  booktitle={Proceedings of the thirteenth ACM international conference on Information and knowledge management},
  pages={42--49},
  year={2004}
}

@article{schuirmann1987comparison,
  title={A comparison of the two one-sided tests procedure and the power approach for assessing the equivalence of average bioavailability},
  author={Schuirmann, Donald J},
  journal={Journal of pharmacokinetics and biopharmaceutics},
  volume={15},
  number={6},
  pages={657--680},
  year={1987},
  publisher={Springer}
}
\bibliographystyle{iclr2027_conference}

\appendix

\section{Configuration}
\label{app:config}

\begin{table}[h]
\centering
\small
\begin{tabular}{@{}llr@{}}
\toprule
Component & Parameter & Value \\
\midrule
Selection & facts per request $k$; character budget $B$ (whole block) & 6; 800 \\
 & shortlist before selection & 50 \\
Query & positional decay $\lambda$; participant weight $\beta$ (additive) & 0.95; 1.5 \\
 & term cap $m$ & 12 \\
BM25F & field weights $\alpha$ (text, entities, participants) & 1.0, 2.5, 2.0 \\
 & $k_1$; $b$ (shared by all fields) & 1.2; 0.75 \\
Score & recency floor $\varepsilon$; half-life $T_{\mathrm{rec}}$ & 0.15; 30 days \\
 & usage weight $\eta$; usage half-life & 0.35; 14 days \\
 & graph weight $\gamma$ & 0.8 \\
Graph & edge half-life $T_g$; neighbors stored per node & 30 days; 64 \\
 & damping $\rho$; non-seed nodes activated; participant seeds & 0.5; 12; yes \\
Admission & Jaccard threshold; value-token match; usage inheritance & 0.8; yes; yes \\
Pruning & retention threshold; graph edge threshold & 0.02; 0.05 \\
\bottomrule
\end{tabular}
\caption{Frozen parameters, identical in every experiment. Ablations change one entry at a time.}
\label{tab:config}
\end{table}

The BM25F score of the lexical stage is
\begin{equation}
\mathrm{BM25F}(q_t,f)=\sum_w q_t(w)\,\mathrm{idf}(w)\,\frac{(k_1+1)\,\widetilde{\mathrm{tf}}(w,f)}{\widetilde{\mathrm{tf}}(w,f)+k_1\bigl(1-b+b\,\mathrm{dl}_f/\overline{\mathrm{dl}}\bigr)},
\end{equation}
where $\widetilde{\mathrm{tf}}(w,f)=\sum_F\alpha_F\,\mathrm{tf}_F(w,f)$ merges the text, entity, and participant fields with the weights above, $\mathrm{dl}_f=\sum_w\widetilde{\mathrm{tf}}(w,f)$, and $\mathrm{idf}(w)=\log\bigl(1+(N-n_w+0.5)/(n_w+0.5)\bigr)$ over the $N$ active facts.

All experiments use one fixed configuration. Benchmark seeds are 7, 11, 13, 17, and 19; seed 7 is used for the latency, snapshot-load, response-level, and rewritten-corpus studies. Supersession orders updates by arrival unless stated otherwise; the event-time variant of Section~\ref{sec:slices} instead preserves the value with the latest stated time.
\section{Benchmark}
\label{app:bench}

\paragraph{Chains and messages.} Chains 0 to 11 are assigned to six pairs of people who share a first name, and both members of a pair receive a chain on the same person slot. Later chains draw an unused (entity, slot) pair at random, cycling over slot types, so they can also collide with a first name. A chain with $r$ revisions has $r+1$ messages with ages drawn so that later messages are more recent (between 0.3 and 27 days before the query time). Each message uses one of three templates per slot, for example ``The \{noun\} with \{entity\} is moved to \{value\}.'', ``\{entity\} \{noun\} is rescheduled to \{value\}.'', and ``New time for the \{noun\} with \{entity\}: moved to \{value\}.'' for meetings. When chain $i$ satisfies $i \bmod 10=7$ and has a revision, all its messages use the first template (verbatim revisions). A cancelled chain ends with ``The \{noun\} with \{entity\} is cancelled.'' and its current value is the cancellation. Each message's participant is the person for person slots and a fixed randomly chosen person for project slots. Filler messages come from four templates (ticket assignments, deployments, hardware maintenance, license reservations) with bot senders as participants and eight names that do not occur in chains, and twelve short chit-chat messages to random people are added.

\paragraph{Queries and labels.} Query templates end immediately before the value, for example with the phrase ``the standup with Alex is at'' or ``Alex standup is now at''. The labels of a query are the chain's current value, its superseded values (all values for a cancelled chain), and the current values of chains on the same slot whose person has the same first name. Values are drawn from type-specific generators and rejected if their normalized form was used before, so each value belongs to one chain. Normalization lowercases, abbreviates month names, removes digit-grouping commas, and removes the space before am/pm. Containment requires token boundaries, so ``Dec 2'' does not match ``Dec 24'' and prefix matches do not create false labels.

\paragraph{Slices.} The slices of Section~\ref{sec:slices} reuse the same generator. \emph{Multi-valued}: 24 people, each with a standup and a sync chain of two revisions, 240 queries over five seeds, plus 200 filler messages. \emph{Out of order}: the main corpus with the messages of 30\% of the chains permuted in arrival order, their stated times unchanged (37 to 43 chains per seed). \emph{Participants}: the main corpus with each chain assigned in turn to one of four cases, the participant being the subject, another person, a pair including the subject, or absent; queries keep the subject as $P_t$. \emph{Hard identity}: 12 pairs of people who share a first name, each pair holding a standup chain with the same noun, the same template, and start times within a day, plus one query per pair about a room that nobody mentioned, whose labels are all meeting values in the corpus.

\paragraph{Rewritten corpus.} Section~\ref{sec:extractor} rewrites the seed 7, 11 and 13 corpora without fillers (1{,}316 messages, 1{,}268 of them in chains; 349 of 429, 333 of 425 and 330 of 414 rewrites accepted). One model (qwen2:7b) does the rewriting for every arm, so a key assigner is never compared against text of its own making, and the four assigners of Table~\ref{tab:extractor} read identical corpora. A rewrite is accepted when every name survives and every value survives, after restoring respellings with the same rules the judge uses; otherwise the template sentence stays. Key assignment is scored against the generator's (entity, slot) keys and pooled over seeds, with the error taxonomy of Table~\ref{tab:key_errors} counted per message; the model-resolved path shows the model the five highest-scoring active facts by PFM's own ranking.

\begin{table}[h]
\centering
\scriptsize
\setlength{\tabcolsep}{3pt}
\begin{tabular}{@{}lrrrrrrrrrrrrr@{}}
\toprule
Seed & Msgs & Queries & Amb. chains & Amb. & Rev. & Canc. & Verb. & Para. & $d{=}0$ & $d{=}1$ & $d{=}2$ & $d{=}3$ & $d{=}4$ \\
\midrule
s7 & 1441 & 240 & 33 & 43 & 188 & 14 & 14 & 80 & 52 & 81 & 48 & 34 & 25 \\
s11 & 1437 & 240 & 32 & 40 & 196 & 16 & 13 & 80 & 44 & 81 & 54 & 30 & 31 \\
s13 & 1426 & 240 & 34 & 48 & 189 & 6 & 14 & 80 & 51 & 70 & 73 & 23 & 23 \\
s17 & 1465 & 240 & 31 & 42 & 196 & 16 & 13 & 80 & 44 & 59 & 66 & 39 & 32 \\
s19 & 1428 & 240 & 38 & 50 & 193 & 21 & 12 & 80 & 47 & 69 & 77 & 25 & 22 \\
\bottomrule
\end{tabular}
\caption{Benchmark accounting per seed (filler corpus). Queries per chain: 80 chains have two queries and 80 have one. Columns after ``Amb. chains'' count queries: ambiguous, revised, cancelled, verbatim-revision, paraphrased query template, and revision depth $d$.}
\label{tab:counts}
\end{table}
\section{Memory Block and Prompt}
\label{app:prompt}

A block starts with the header \texttt{[Relevant memory]} and has one line per fact, \texttt{- (2025-06-11, to Alex Rivera) The standup with Alex Rivera is moved to room 16C.}, with the date of the source observation and its participants. All arms use this format and the greedy selection of Section~\ref{sec:pfm}. The response-level prompt is:

\begin{quote}\small\ttfamily
[request 1804289383]\\
You are helping the user write a message to \{participant\}. Continue the unfinished message with only the words that come next.\\[2pt]
\{latest-value instruction, one arm only\}\{memory block\}\\[2pt]
Unfinished message to \{participant\}:\\
\{query prefix\}
\end{quote}

The first line is a random request id that prevents the server from reusing a cached prompt prefix, so every TTFT includes a full prefill. The latest-value instruction reads: ``Memory lines are dated. If two lines give different values for the same thing, use the most recent one.'' The judge normalizes the completion as above and assigns one of five outcomes: \emph{correct} (current value only), \emph{mixed} (current value together with a superseded or other-person value), \emph{stale}, \emph{wrong} (other-person value only), and \emph{other}. For a cancelled chain any form of ``cancel'' counts as the current value.
\section{Identity Resolution and Abstention Details}
\label{app:identity}

\paragraph{Candidates and aliases.} Mentions are the alphabetic, non-stopword tokens of the request. A mention is ambiguous when two or more stored entity strings contain it, and the candidate set is the union of those strings. Two candidates are \emph{aliases} rather than rivals when one's token set contains the other's, which is how the extractor's ``halcyon'' and ``the halcyon'' come to name one project; aliases are pooled into a single outcome of Eq.~(\ref{eq:identity}) and share its probability, so only genuinely competing names are resolved. A request whose mentions all resolve to one alias group is left untouched.

\paragraph{Slot detector.} The gate reads the requested slot from its noun when the noun is unambiguous (deadline, due date, launch, cutover, day rate, budget cap) and from its trailing cue otherwise, because the benchmark's room and meeting slots share nouns: ``the standup with Alex is in'' asks for a room, whereas ``\ldots is at'' asks for a time. The rule agrees with the generator's slot on all 1{,}200 main-benchmark queries and all 180 hard-slice queries. Because this accuracy depends on the benchmark's fixed slot inventory, we also evaluate the schema-free margin gate.

\paragraph{Thresholds.} The posterior threshold $\tau$ of the slot gate makes no difference between 0.3 and 0.5 (clean retrieval 0.967, abstention 0.002 on the main benchmark); at 0.7 it starts rejecting correct facts whose candidate is not clearly ahead, and clean retrieval falls to 0.712 with 25.7\% abstention. The margin gate was swept at $\tau\in\{1.5,2,3,5\}$ on both kinds of corpus; on the controlled benchmark every setting abstains on 15.8 to 17.8\% of answerable requests and on none of the unanswerable ones, and Section~\ref{sec:real} reports the sweep on LoCoMo.
\section{Public Corpora and Their Labels}
\label{app:real}

\paragraph{LongMemEval.} We use the oracle release, which associates each question with the sessions containing its answer, and take its 78 knowledge-update questions. Each question annotates two user turns as answer-bearing: the later turn states the value in force and the earlier turn the value it replaced. To obtain span-level labels for exposure, qwen2:7b is asked once per annotated turn to copy, verbatim and in at most eight words, the span that answered the question at that time. A question is retained only when both spans occur verbatim in their source turns after the normalization of Appendix~\ref{app:bench}, contain at most ten tokens, and differ from one another. Excluded questions and their reasons are recorded in the released label audit. We do not use the dataset's answer string as a containment label because it is often a paraphrase rather than a source span.

The corpus the store ingests is every user turn of those 78 questions plus every user turn of 160 further questions drawn at random from the other five question types, which serve as distractors. We cap the distractors because assigning an open-domain key costs one model call per turn. Assistant turns are not ingested: the memory is of what the user said.

\paragraph{Open-domain keys.} The rule extractor of Section~\ref{sec:pfm} does not transfer: it requires a signal verb and an anchor and keeps 0.29 sentences per turn on this corpus, compared with 2.75 for a rule that keeps every sentence. The public-corpus stores therefore use the latter, with keys assigned by a model. For each turn the model returns an entity, attribute, and value and is instructed to reuse the same wording when a property recurs; the normalized (entity, attribute) pair becomes the key for the sentence carrying that value. We evaluate three assignment procedures. \emph{Stateless} applies this prompt once per turn with qwen2:7b or qwen2.5:7b. \emph{Store in the extraction prompt} adds the ten existing keys with the largest token overlap and asks the model to copy an existing pair when appropriate. \emph{Model-judged linking} leaves extraction unchanged and adds a resolution step in which a new key that shares a token with existing keys is compared with up to ten candidates. This procedure keys exactly the same turns as its stateless counterpart, so the comparison isolates linking rather than coverage. We also compare a Jaccard linker that merges keys by token overlap without a model call.

Three diagnostics then measure key quality without a key annotation. \emph{Both turns keyed} is the share of knowledge-update questions whose two annotated turns each received a key at all, which bounds what any resolver can merge. \emph{Merged given both} is the share of those that received the same key, which is the merge supersession needs. The \emph{cross-question rate} is the share of keys shared by turns belonging to different questions, which is the false merge that deletes a value. Merge recall, the product of the first two, is reported as well because it is the quantity a reader expects, but it cannot be compared across arms that key different shares of the corpus, and these do: 1{,}198, 1{,}657 and 438 of 2{,}369 turns.

\paragraph{LoCoMo.} We ingest every dialogue turn of all ten conversations into one store, prefixing each turn with its speaker so the speaker is part of the record and an entity of the fact. Questions are those of the single-hop, multi-hop and temporal categories whose gold answer occurs verbatim in their own conversation; the rest are skipped and counted. Misattribution needs no value label here: each fact carries the conversation it came from, and a block that answers a question from one conversation with a fact from another whose speakers share a first name is a misattribution by provenance. Three of the ten conversations have a speaker named John. For the abstention measurement we rebuild one store per conversation and ask it both its own questions and twenty questions from each other conversation, which have no answer in it.
\section{Timing Protocol and Environment}
\label{app:timing}

Hardware: Apple M2 (4 performance and 4 efficiency cores), 16~GB RAM, macOS 26.5. Software: CPython 3.11.1, PyTorch 2.10.0, sentence-transformers 5.3.0, NumPy 2.4.3. Retrieval latency is measured with \texttt{time.perf\_counter} around ranking, selection, and block formatting; the store is built before timing and no construction runs concurrently. Dense retrieval uses all-MiniLM-L6-v2 on the CPU with 4 PyTorch threads, batch size 1, and fact embeddings computed at build time (1.2~s for about 1{,}400 facts); query embeddings are not cached. For each of the 240 seed-7 queries and each of 5 repeats, all 15 arms run in a freshly shuffled order after a warm-up pass, and percentiles use all 1{,}200 runs per arm. The snapshot read is timed as a reference load of a published result. MiniLM query encoding accounts for nearly all dense latency in this protocol (median 17.6~ms of 17.9~ms). The response-level study uses ollama 0.34.0 serving qwen2:7b and llama3.2:3b (both Q4\_0, fully on the GPU, context 4{,}096), temperature 0, seed 0, and at most 24 generated tokens; the sampling check of Section~\ref{sec:llm} repeats the comparison at temperature 0.7 with seeds 1, 2, and 3. The rewriting and key-assignment study calls qwen2:7b with at most 80 tokens for a rewrite, 48 for a key, and 8 for an update decision. The public-corpus study of Section~\ref{sec:real} uses at most 32 tokens for an open-domain key and 32 for a label span, while the query-time baselines of Section~\ref{sec:systems} use 24 for reranking and 8 for resolution or abstention. Model outputs are cached by model, length, and prompt and included with the anonymous materials, so reproducing the reported analyses does not require rerunning inference. Request-path model latency is timed around the model call with the model already resident. The machine holds one evaluated model at a time; loading qwen2.5:14b takes about 95~s and is excluded from all reported request-path latency. The model ladder in Section~\ref{sec:extractor} is therefore evaluated one model at a time.
\section{Serving Under Snapshot Lag and Load}
\label{app:serving}

\paragraph{Protocol and bound.}

In snapshot mode, a background worker periodically runs retrieval and publishes an immutable result by assigning one reference. A request reads that reference. Each snapshot records the store version, the request context, and when it was produced. In fresh mode, the request hands its context to a retrieval worker that is already running and waits at most $\omega_t<D_t$; on timeout it sets a cancellation flag, which the worker checks between query terms and before scoring, and returns the latest snapshot. Workers acquire the store lock; the request thread never does.

\begin{proposition}[Bound on memory-induced waiting]
\label{prop:waiting}
Assume (i) snapshots are immutable and published by a single reference assignment, (ii) the request thread acquires no lock that construction or a retrieval worker holds, (iii) reading the reference takes at most $r$, (iv) handing a request to an already-running worker takes at most $h$, a free worker being available, and (v) the request thread runs within $\sigma$ of becoming runnable, at every point on this path. Then snapshot mode adds at most $\sigma+r$ to the response path, and fresh mode adds at most $h+\omega_t+\sigma+r$, independently of how long extraction, index maintenance, graph updates, or pruning take.
\end{proposition}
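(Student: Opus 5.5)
The plan is to trace the request thread's execution path in each mode and bound it segment by segment. Each segment's duration will come from one of assumptions (i)--(v), so the total never depends on what construction or retrieval workers are doing. I will model the request thread's memory-related work as a finite sequence of steps. The key observation is that a thread's elapsed time over a step equals the time it spends running that step plus the time it spends runnable but not scheduled. Blocked time would be a third component, and I will show there is none.

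\textbf{Snapshot mode.} The request thread performs exactly one memory action: it reads the published reference and uses the snapshot it points to. By (i), publication is a single reference assignment and the snapshot is immutable afterwards. The read therefore needs no synchronization and cannot observe a partially built snapshot, so no retry or validation loop is ever entered. By (ii), the thread never waits on a lock held by construction or a worker, so it is never blocked. Its elapsed time on this path is then at most its scheduling delay plus its execution time, which (v) and (iii) bound by $\sigma+r$.

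\textbf{Fresh mode.} The path has three segments.
\begin{enumerate}
\item Handoff to an already-running free worker, at most $h$ by (iv).
\item A bounded wait for the worker's result. The wait is implemented as a timed wait, so the thread becomes runnable no later than $\omega_t$ after the wait begins, whether the result arrives or the timeout fires. Setting the cancellation flag is a store into a flag, not a lock acquisition, so (ii) keeps the thread unblocked.
\item Returning a result. On success it returns the worker's published result; on timeout it reads the latest snapshot. In either case this costs a scheduling delay of at most $\sigma$ plus a reference read of at most $r$, exactly as in snapshot mode.
\end{enumerate}
Summing the segments gives $h+\omega_t+\sigma+r$. Neither bound contains a term for extraction, index maintenance, graph updates, or pruning. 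Those run only on worker or background threads, and the request thread never synchronizes with them except through the reference read and the flag write. This gives the claimed independence.

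\textbf{Main obstacle.} The delicate point is charging the scheduling delay $\sigma$ only once. Preemption can occur during the handoff or the read as well as after the wait. I will handle this by interpreting (v) as a bound on \emph{total} runnable-but-unscheduled time along the path. Alternatively, $h$ and $r$ can be read as wall-clock bounds that already absorb preemption within those segments, so that $\sigma$ only covers the wake-up after the timed wait. I would state this convention explicitly and check it against the appendix measurements of the handoff and scheduling terms. A second subtlety is that late worker completion after cancellation must not delay the request thread. This follows because the thread returns on timeout without joining the worker, and the worker's eventual publication is again a single assignment under (i).
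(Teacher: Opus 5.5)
Your proposal is correct and follows the paper's proof essentially step for step. In snapshot mode there is one non-blocking reference read costing $\sigma+r$; in fresh mode there is the handoff, then the timed wait, then the snapshot-mode cost, and independence from construction holds because the request thread never takes a lock it could share with construction or workers. Your explicit convention that $\sigma$ is charged once along the path is implicit in the paper's proof and matches how Appendix~\ref{app:serving} measures $\sigma$ (observed fresh waiting minus $\omega$ and the read); the paper's success branch also omits the $r$ you charge there, which only loosens your bound and does not invalidate it.
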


Appendix~\ref{app:proofs} gives the proof. Assumptions (iv) and (v) concern the runtime rather than the publication protocol. Starting a thread for each request is unsuitable here because \texttt{Thread.start} blocks until the new thread runs; under interpreter-lock contention this cost has a median of $3.0$~ms and a maximum of $12.3$~ms, already exceeding the intended waiting budget. We therefore use prestarted workers and a queue, for which the median handoff cost is $7~\mu$s. More than one worker is also required because a single worker can create head-of-line blocking while a cancelled retrieval drains. Assumption (v) requires the scheduling delay on the serving path itself, not a proxy measured elsewhere, so Appendix~\ref{app:serving} reports that quantity directly. The proposition bounds waiting, not freshness or model latency: snapshot age is unbounded unless publication runs on a fixed period, and a snapshot built before the latest revision exposes the old value.

\paragraph{Lag.} For each query we build the snapshot on the store as it was just before the last one or two messages of the query's own chain arrived (for a one-message chain, before that message), and judge it against the current labels. Table~\ref{tab:lag} shows that a snapshot one relevant update old exposes the superseded value on every revised query and lacks the fact entirely on unrevised ones. Snapshot mode is always on time but is only as fresh as its last publication.

\paragraph{Load.} A store of 20{,}000 synthetic facts (Appendix~\ref{app:scaling}) serves 2{,}000 requests while a construction thread ingests facts at up to 400 per second and prunes once per second, and a snapshot worker publishes every 50~ms. Each request reads the snapshot and then issues a fresh request with a 5~ms waiting budget, both timed from the request thread, with the garbage collector enabled and disabled. Each run ingested about 2{,}500 facts and completed 22 to 23 pruning passes.

Snapshot mode behaves as the bound predicts: the reference read $r$ is 2~$\mu$s at the median and 30~$\mu$s at its maximum. Fresh mode requires the scheduling delay on the serving path itself. A 2~ms sleep measured elsewhere in the request loop gives a much smaller quantity and does not bound the observed fresh waiting. Two effects explain the difference. Starting a retrieval worker blocks the caller until the new thread runs, costing 3.0~ms at the median and 12.3~ms at the maximum under this load; additional delay arises when the request thread is descheduled while another worker holds the interpreter lock.

Workers are therefore started once and fed through a queue, which brings the hand-off term $h$ to 17~$\mu$s at the median and 58~$\mu$s at its worst. There are four of them rather than one, because with a single worker a request queues behind whatever cancelled retrieval is still draining: that alone took the fresh-mode fallback rate from 12\% to 49\%, which is head-of-line blocking the bound does not name. With the pool, the worker finishes inside the 5~ms budget on 81\% of requests with the collector enabled and 88\% with it disabled.

The scheduling delay required by the bound is $\sigma=2.1$~ms at the median, 28.5~ms at the 99th percentile, and 34.8~ms at the maximum with the collector enabled; without the collector, the corresponding values are 1.8, 10.6, and 16.0~ms. A sleep-based proxy gives 1.0, 8.5, and 32.2~ms and understates the required 99th-percentile delay by about a factor of three. Proposition~\ref{prop:waiting} is conditional on this runtime quantity, which is about 35~ms at the observed maximum. The interpreter lock and garbage collector determine the tail in this implementation; a runtime with different scheduling behavior would require its own measurement of $\sigma$.

\begin{table}[h]
\centering
\small
\begin{tabular}{@{}llrrr@{}}
\toprule
Lag & Slice & Current & Stale & Clean \\
\midrule
0 & revised & 1.000 & 0.000 & 0.877 \\
0 & unrevised & 0.992 & 0.000 & 0.878 \\
1 & revised & 0.021 & 1.000 & 0.000 \\
1 & unrevised & 0.000 & 0.000 & 0.000 \\
2 & revised & 0.012 & 0.626 & 0.006 \\
2 & unrevised & 0.000 & 0.000 & 0.000 \\
\bottomrule
\end{tabular}
\caption{Snapshot lag measured in updates to the queried slot (1{,}200 queries, keyed PFM).}
\label{tab:lag}
\end{table}

\begin{table}[h]
\centering
\scriptsize
\setlength{\tabcolsep}{3pt}
\begin{tabular}{@{}lrrrrrrrrrr@{}}
\toprule
GC & $\sigma$ p50 & p99 & max & read p50 & p99 & max & fresh p50 & p99 & max & fallback \\
\midrule
on & 2.14 & 28.5 & 34.8 & 0.00 & 0.01 & 0.03 & 7.14 & 33.5 & 39.8 & 0.192 \\
off & 1.85 & 10.6 & 16.0 & 0.00 & 0.01 & 0.01 & 6.85 & 15.6 & 21.0 & 0.117 \\
\bottomrule
\end{tabular}
\caption{Memory-induced waiting under concurrent construction (milliseconds; fresh waiting budget $\omega=5$~ms). $\sigma$ is the scheduling delay the bound of Proposition~\ref{prop:waiting} requires, computed per request as the observed fresh waiting minus $\omega$ and the reference read, not as the wake-up delay of a sleep taken elsewhere. ``read'' is the snapshot reference read $r$. Fallback: fraction of fresh requests answered from the snapshot.}
\label{tab:load}
\end{table}
\section{Store-Size Scaling}
\label{app:scaling}

Synthetic facts name two entities drawn from a pool equal to one tenth of the store size (at least 200), together with six verbs, six objects, and 40 participants, so posting lists for shared terms grow with the store. The distribution is deliberately unfavorable for term-at-a-time scoring. Each row builds a fresh store, times 300 queries after 50 warm-up queries, repeats the queries after \texttt{gc.freeze()}, and measures heap size in a separate build under \texttt{tracemalloc}. Per-observation construction cost stays near 0.14~ms across the grid. Retrieval cost follows the longest posting list traversed rather than fact count alone: with long-lived objects frozen, the median is 0.20~ms at 1{,}000 facts, 2.8~ms at 20{,}000, and 22.6~ms at 100{,}000. Without freezing, collector pauses raise the tail from 3.4~ms to 21~ms at 20{,}000 facts and from 38~ms to 159~ms at 100{,}000. The result is highly sensitive to shared vocabulary: when one participant token appears in every fact, every query traverses a posting list of length $n$ and retrieval at 20{,}000 facts is about five times slower. We therefore report posting-list structure alongside fact count.

\begin{table}[h]
\centering
\small
\begin{tabular}{@{}rrrrrrrr@{}}
\toprule
Facts & Queries & p50 & p99 & p50 (frozen) & p99 (frozen) & Add p50 & Heap (MB) \\
\midrule
100 & 300 & 0.03 & 0.04 & 0.03 & 0.04 & 0.025 & 0.4 \\
1{,}000 & 300 & 0.20 & 0.23 & 0.20 & 0.22 & 0.076 & 2.9 \\
5{,}000 & 300 & 0.70 & 0.77 & 0.70 & 0.75 & 0.132 & 13.4 \\
10{,}000 & 300 & 1.39 & 1.59 & 1.30 & 1.39 & 0.136 & 26.5 \\
20{,}000 & 300 & 2.92 & 21.3 & 2.77 & 3.37 & 0.143 & 52.9 \\
50{,}000 & 300 & 9.20 & 62.9 & 9.71 & 11.6 & 0.141 & 137.5 \\
100{,}000 & 300 & 23.0 & 158.9 & 22.6 & 37.6 & 0.142 & 275.0 \\
\bottomrule
\end{tabular}
\caption{Fresh retrieval latency (ms) by store size on the synthetic distribution; ``frozen'' rows exclude long-lived objects from garbage collection. Add: per-observation construction cost. Heap: traced allocation after the build.}
\label{tab:scaling}
\end{table}
\section{Where Lexical Retrieval Stops Being Cheaper}
\label{app:crossover}

PFM's serving cost grows with the posting lists a query traverses, while a dense retriever pays a roughly constant query-encoding cost plus a scan. Figure~\ref{fig:crossover} times both on the same synthetic stores under two term distributions: facts drawn from six verbs and six objects (wide vocabulary), and facts that all share one verb and one object (narrow vocabulary), with the entity pool at one tenth of the store in both. The distributions differ in exactly the quantity that matters: at 20{,}000 facts the longest posting list a query traverses is 3{,}327 terms in the wide setting and 20{,}000 in the narrow one.

Under the narrow distribution PFM passes MiniLM (encoding plus an exact scan, no approximate index) at about 20{,}000 facts, reaching 69~ms at 50{,}000 against dense retrieval's 20~ms. Under the wide distribution PFM is still the cheaper path at 50{,}000 facts, 15.7~ms against 17.4~ms, and the two curves meet around there. A personal store of a few thousand facts is far below either crossing, which is the regime the main text measures; a store that keeps everything a user reads, or a corpus whose facts share most of their words, is not.

\begin{figure}[h]
\centering
\includegraphics[width=0.59\columnwidth]{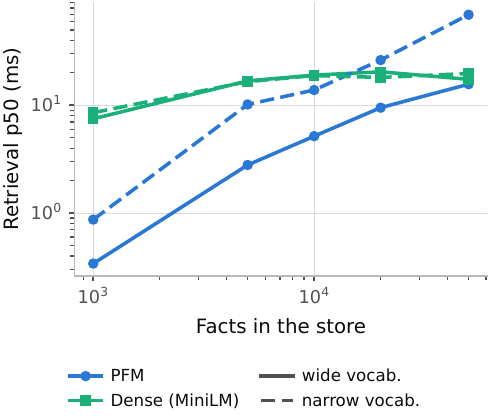}
\caption{Median retrieval latency of PFM and a MiniLM dense retriever on the same synthetic stores (100 timed queries per point after 20 warm-up queries, long-lived objects frozen). Colour is the method and the dash pattern is the term distribution. Both axes are logarithmic.}
\label{fig:crossover}
\end{figure}
\section{Additional Results}
\label{app:full}

\begin{table}[h]
\centering
\small
\setlength{\tabcolsep}{4pt}
\begin{tabular}{@{}lrrrrr@{}}
\toprule
Construction & Key prec. $\uparrow$ & Key rec. $\uparrow$ & Clean $\uparrow$ & Lost $\downarrow$ & $\Delta$ clean vs rule $\uparrow$ \\
\midrule
rule keys & 1.000 & 0.807 & \cellcolor{cPos!60}0.781 & \cellcolor{cPos!60}0.047 & -- \\
\quad model keys, llama3.2:3b & 0.930 & 0.994 & 0.592 & 0.331 & $-$0.189 {\scriptsize($-$0.241, $-$0.136)} \\
\quad model keys, qwen2:7b & 0.926 & 0.991 & 0.560 & 0.342 & $-$0.221 {\scriptsize($-$0.273, $-$0.168)} \\
\quad model keys, qwen2.5:7b & 0.990 & 0.996 & 0.644 & 0.275 & $-$0.136 {\scriptsize($-$0.186, $-$0.086)} \\
\quad model keys, qwen2.5:14b & 0.969 & 0.980 & 0.593 & 0.307 & $-$0.188 {\scriptsize($-$0.240, $-$0.134)} \\
\midrule
model updates, llama3.2:3b$^\dagger$ & -- & -- & 0.292 & 0.517 & $-$0.471 {\scriptsize($-$0.563, $-$0.379)} \\
model updates, qwen2:7b$^\dagger$ & -- & -- & 0.358 & 0.308 & $-$0.404 {\scriptsize($-$0.492, $-$0.318)} \\
model updates, qwen2.5:7b$^\dagger$ & -- & -- & 0.429 & 0.521 & $-$0.333 {\scriptsize($-$0.427, $-$0.241)} \\
model updates, qwen2.5:14b$^\dagger$ & -- & -- & 0.596 & 0.279 & $-$0.167 {\scriptsize($-$0.246, $-$0.094)} \\
keyless & -- & -- & 0.243 & 0.047 & $-$0.537 {\scriptsize($-$0.590, $-$0.488)} \\
\bottomrule
\end{tabular}
\caption{Key-assignment accuracy and prompt outcomes on 720 rewritten-corpus queries from seeds 7, 11, and 13. One rewriter (qwen2:7b) produces the text for every arm, so the key-assignment arms receive identical messages. \emph{Key prec.} and \emph{key rec.} compare assigned keys with the generator's (entity, slot) keys, counting a misfiled key as a false positive and a missing key as a false negative; Table~\ref{tab:key_errors} reports the corresponding counts. \emph{Clean} follows Table~\ref{tab:temporal}. \emph{Lost} is the share of queries whose current value appears in no active fact after a false merge. $\Delta$ is the paired difference in clean retrieval relative to the rule extractor, with a chain-cluster bootstrap 95\% interval. The Mem0-shaped path ($^\dagger$) runs at seed 7 only, so its interval resamples that seed's chains.}
\label{tab:extractor}
\end{table}

\begin{table}[h]
\centering
\small
\setlength{\tabcolsep}{4pt}
\begin{tabular}{@{}lrrrrl@{}}
\toprule
Key assigner & Correct $\uparrow$ & Missed $\downarrow$ & Wrong slot $\downarrow$ & Wrong entity $\downarrow$ & Commonest confusion \\
\midrule
rule keys & 1{,}023 & 245 & \cellcolor{cPos!60}0 & 0 & -- \\
llama3.2:3b & 1{,}173 & 7 & 88 & 0 & room $\to$ meeting (76) \\
qwen2:7b & 1{,}164 & 11 & 93 & 0 & room $\to$ meeting (74) \\
qwen2.5:7b & 1{,}250 & 5 & 13 & 0 & deadline $\to$ meeting (12) \\
qwen2.5:14b & 1{,}205 & 25 & 38 & 0 & room $\to$ meeting (18) \\
\bottomrule
\end{tabular}
\caption{Key-assignment error counts over the same 1{,}268 chain messages, pooled across three seeds. Each row partitions messages into the correct key, no key, the correct entity with the wrong slot, or the wrong entity. A missing key leaves revisions separately active; a wrong-slot key can merge distinct slots and remove a current value. No evaluated model assigns a key to the wrong entity, and three of the four models share room $\to$ meeting as their most common confusion.}
\label{tab:key_errors}
\end{table}

\begin{figure}[h]
\centering
\includegraphics[width=0.59\columnwidth]{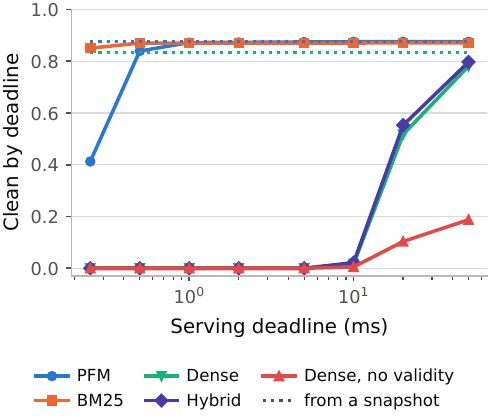}
\caption{Clean retrieval by deadline, $Y(D)$, on seed 7. Every arm receives $P_t$ and serves only active facts except the one marked otherwise. Solid lines: fresh retrieval, counting a run as successful only if it is clean and finishes within $D$; the four validity-restricted arms coincide above 1~ms, and Dense sits under Hybrid. Dotted: the same ranking served from a snapshot built after the latest revision, whose read takes about 1~$\mu$s; Appendix~\ref{app:serving} shows what happens when the snapshot is older.}
\label{fig:deadline}
\end{figure}

\begin{table}[h]
\centering
\scriptsize
\setlength{\tabcolsep}{3pt}
\begin{tabular}{@{}lrrrrrr@{}}
\toprule
Method & Clean & Current & Stale & Wrong & p50 (ms) & p99 (ms) \\
\midrule
PFM & 0.877 {\scriptsize(0.850, 0.904)} & 0.998 & 0.000 & 0.121 & 0.25 & 0.69 \\
BM25 + validity & 0.774 {\scriptsize(0.737, 0.809)} & 0.891 & 0.000 & 0.120 & 0.09 & 0.26 \\
BM25 + validity ($+P_t$) & 0.873 {\scriptsize(0.847, 0.896)} & 1.000 & 0.000 & 0.128 & 0.10 & 0.29 \\
Dense + validity & 0.834 {\scriptsize(0.802, 0.866)} & 0.998 & 0.000 & 0.165 & 17.9 & 88.5 \\
Dense + validity ($+P_t$) & 0.836 {\scriptsize(0.803, 0.868)} & 1.000 & 0.000 & 0.164 & 18.3 & 93.0 \\
Hybrid + validity & 0.807 {\scriptsize(0.773, 0.842)} & 0.949 & 0.000 & 0.151 & 17.5 & 79.7 \\
Hybrid + validity ($+P_t$) & 0.843 {\scriptsize(0.813, 0.872)} & 1.000 & 0.000 & 0.157 & 18.2 & 82.4 \\
\midrule
BM25 & 0.155 {\scriptsize(0.126, 0.186)} & 0.852 & 0.715 & 0.074 & 0.15 & 0.38 \\
BM25 ($+P_t$) & 0.195 {\scriptsize(0.164, 0.227)} & 0.997 & 0.802 & 0.007 & 0.18 & 0.46 \\
BM25 + recency & 0.185 {\scriptsize(0.154, 0.215)} & 0.877 & 0.667 & 0.102 & 0.18 & 0.55 \\
BM25 + recency ($+P_t$) & 0.193 {\scriptsize(0.162, 0.226)} & 0.989 & 0.784 & 0.038 & 0.21 & 0.60 \\
Dense (MiniLM) & 0.167 {\scriptsize(0.137, 0.198)} & 0.968 & 0.800 & 0.131 & 18.5 & 86.6 \\
Dense (MiniLM) ($+P_t$) & 0.181 {\scriptsize(0.151, 0.212)} & 0.989 & 0.802 & 0.060 & 19.0 & 97.4 \\
Hybrid (RRF) & 0.165 {\scriptsize(0.136, 0.195)} & 0.889 & 0.787 & 0.098 & 18.8 & 88.5 \\
Hybrid (RRF) ($+P_t$) & 0.188 {\scriptsize(0.157, 0.219)} & 0.998 & 0.802 & 0.027 & 19.0 & 96.4 \\
\bottomrule
\end{tabular}
\caption{All retrieval arms (1{,}200 queries; latency from seed 7). Clean intervals are two-level (seed, chain) bootstrap 95\% intervals.}
\label{tab:full}
\end{table}

\begin{table}[h]
\centering
\scriptsize
\setlength{\tabcolsep}{4pt}
\begin{tabular}{@{}lrrrrrrr@{}}
\toprule
Variant & Clean & $\Delta$ & Current & Stale & Wrong & Amb. clean & Amb. wrong \\
\midrule
full & 0.877 {\scriptsize(0.850, 0.904)} & +0.000 & 0.998 & 0.000 & 0.121 & 0.341 & 0.650 \\
no participant field & 0.842 {\scriptsize(0.813, 0.870)} & -0.036 & 0.996 & 0.000 & 0.154 & 0.170 & 0.830 \\
no participant terms & 0.811 {\scriptsize(0.775, 0.845)} & -0.067 & 0.943 & 0.000 & 0.133 & 0.211 & 0.717 \\
no participant seeds & 0.877 {\scriptsize(0.850, 0.904)} & +0.000 & 0.998 & 0.000 & 0.121 & 0.341 & 0.650 \\
no participants at all & 0.756 {\scriptsize(0.723, 0.788)} & -0.122 & 0.890 & 0.000 & 0.138 & 0.130 & 0.744 \\
no entity field & 0.884 {\scriptsize(0.857, 0.910)} & +0.007 & 0.998 & 0.000 & 0.114 & 0.377 & 0.614 \\
no graph & 0.877 {\scriptsize(0.850, 0.904)} & +0.000 & 0.998 & 0.000 & 0.121 & 0.341 & 0.650 \\
no usage & 0.877 {\scriptsize(0.850, 0.904)} & +0.000 & 0.998 & 0.000 & 0.121 & 0.341 & 0.650 \\
no recency & 0.879 {\scriptsize(0.854, 0.903)} & +0.002 & 1.000 & 0.000 & 0.121 & 0.350 & 0.650 \\
no positional decay & 0.871 {\scriptsize(0.843, 0.899)} & -0.007 & 0.998 & 0.000 & 0.128 & 0.305 & 0.686 \\
no term cap & 0.877 {\scriptsize(0.850, 0.904)} & +0.000 & 0.998 & 0.000 & 0.121 & 0.341 & 0.650 \\
no usage inheritance & 0.877 {\scriptsize(0.850, 0.904)} & +0.000 & 0.998 & 0.000 & 0.121 & 0.341 & 0.650 \\
text only dedup & 0.877 {\scriptsize(0.850, 0.904)} & +0.000 & 0.998 & 0.000 & 0.121 & 0.341 & 0.650 \\
no keys & 0.247 {\scriptsize(0.210, 0.284)} & -0.631 & 0.950 & 0.703 & 0.004 & 0.247 & 0.022 \\
pfm | participant filter & 0.998 {\scriptsize(0.993, 1.000)} & +0.121 & 0.998 & 0.000 & 0.000 & 0.991 & 0.000 \\
bm25 validity+P | participant filter & 1.000 {\scriptsize(0.995, 1.000)} & +0.123 & 1.000 & 0.000 & 0.000 & 1.000 & 0.000 \\
pfm | cutoff@0.5 & 0.898 {\scriptsize(0.873, 0.921)} & +0.021 & 0.998 & 0.000 & 0.100 & 0.453 & 0.538 \\
pfm | cutoff@0.7 & 0.983 {\scriptsize(0.973, 0.993)} & +0.106 & 0.992 & 0.000 & 0.008 & 0.933 & 0.045 \\
pfm | cutoff@0.9 & 0.866 {\scriptsize(0.835, 0.893)} & -0.012 & 0.866 & 0.000 & 0.000 & 0.839 & 0.000 \\
\bottomrule
\end{tabular}
\caption{Leave-one-out ablation and identity baselines (1{,}200 queries; stale exposure is zero in every keyed row). $\Delta$ is the paired difference from full PFM with a chain-bootstrap interval. $^\dagger$Each removed separately ($\gamma=0$, $\eta=0$, no term cap, no participant seeds, and also no usage inheritance or text-only deduplication); none changes the outcome of any of the 1{,}200 queries. Relative cutoff drops facts scoring below the stated fraction of the top score.}
\label{tab:ablation}
\end{table}

\begin{table}[h]
\centering
\small
\begin{tabular}{@{}lrrrrrrr@{}}
\toprule
Method & $d{=}0$ & $d{=}1$ & $d{=}2$ & $d{=}3$ & $d{=}4$ & Benchmark mix & Reweighted \\
\midrule
PFM (keyed) & 0.878 & 0.867 & 0.874 & 0.887 & 0.902 & 0.877 & 0.877 \\
PFM, keyless & 0.794 & 0.233 & 0.047 & 0.033 & 0.023 & 0.247 & 0.603 \\
BM25 + validity ($+P_t$) & 0.857 & 0.878 & 0.868 & 0.861 & 0.910 & 0.873 & 0.863 \\
Dense + validity ($+P_t$) & 0.819 & 0.850 & 0.818 & 0.848 & 0.857 & 0.836 & 0.826 \\
BM25 ($+P_t$) & 0.983 & 0.000 & 0.000 & 0.000 & 0.000 & 0.195 & 0.688 \\
\bottomrule
\end{tabular}
\caption{Clean retrieval by revision depth $d$, and aggregates under the benchmark's own mix of depths and under a store where 70\% of facts are never revised ($d$ weights 0.70, 0.18, 0.07, 0.03, 0.02).}
\label{tab:depth}
\end{table}

\begin{table}[h]
\centering
\small
\begin{tabular}{@{}lrrrrrr@{}}
\toprule
Method & seed 7 & seed 11 & seed 13 & seed 17 & seed 19 & Mean $\pm$ sd \\
\midrule
PFM & 0.875 & 0.912 & 0.875 & 0.854 & 0.871 & 0.877 $\pm$ 0.021 \\
BM25 + validity & 0.754 & 0.812 & 0.754 & 0.783 & 0.767 & 0.774 $\pm$ 0.025 \\
BM25 + validity ($+P_t$) & 0.871 & 0.892 & 0.867 & 0.867 & 0.867 & 0.873 $\pm$ 0.011 \\
Dense + validity ($+P_t$) & 0.833 & 0.867 & 0.812 & 0.850 & 0.817 & 0.836 $\pm$ 0.023 \\
Hybrid + validity ($+P_t$) & 0.842 & 0.863 & 0.829 & 0.858 & 0.825 & 0.843 $\pm$ 0.017 \\
BM25 ($+P_t$) & 0.212 & 0.179 & 0.208 & 0.183 & 0.192 & 0.195 $\pm$ 0.015 \\
Dense (MiniLM) ($+P_t$) & 0.200 & 0.167 & 0.179 & 0.171 & 0.188 & 0.181 $\pm$ 0.013 \\
\bottomrule
\end{tabular}
\caption{Clean retrieval per benchmark seed (240 queries each).}
\label{tab:seeds}
\end{table}

\begin{figure}[h]
\centering
\includegraphics[width=0.94\columnwidth]{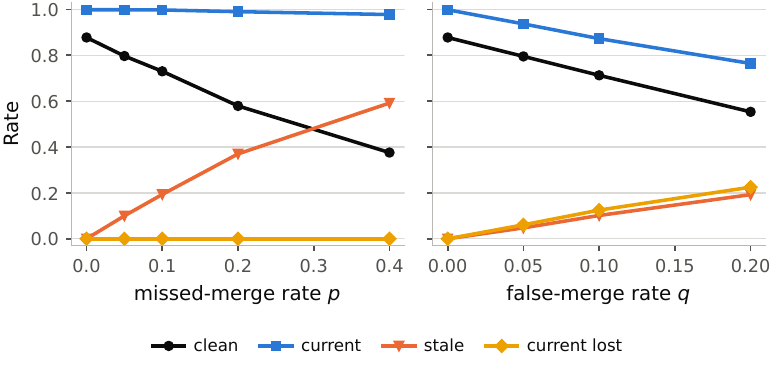}
\caption{Keyed PFM under injected key errors (1{,}200 queries). Left: missed merges at rate $p$ ($q=0$). Right: false merges at rate $q$ ($p=0$). ``Current lost'': the current value is held by no active fact, which is the failure the prompt does not show. Table~\ref{tab:keynoise} gives the full $p\times q$ grid.}
\label{fig:keynoise}
\end{figure}

\begin{table}[h]
\centering
\scriptsize
\begin{tabular}{@{}lcccc@{}}
\toprule
$p$ (rows), $q$ (columns) & 0 & 0.05 & 0.1 & 0.2 \\
\midrule
0 & 0.877 / 0.000 / 0.000 & 0.795 / 0.048 / 0.061 & 0.713 / 0.102 / 0.125 & 0.553 / 0.193 / 0.225 \\
0.05 & 0.797 / 0.100 / 0.000 & 0.709 / 0.152 / 0.068 & 0.614 / 0.210 / 0.129 & 0.496 / 0.271 / 0.214 \\
0.1 & 0.730 / 0.193 / 0.000 & 0.637 / 0.242 / 0.062 & 0.562 / 0.281 / 0.118 & 0.460 / 0.332 / 0.205 \\
0.2 & 0.579 / 0.370 / 0.000 & 0.522 / 0.398 / 0.048 & 0.462 / 0.425 / 0.102 & 0.368 / 0.463 / 0.186 \\
0.4 & 0.376 / 0.592 / 0.000 & 0.341 / 0.592 / 0.044 & 0.331 / 0.587 / 0.083 & 0.288 / 0.598 / 0.121 \\
\bottomrule
\end{tabular}
\caption{Key-noise grid: clean retrieval / stale exposure / share of queries whose current value is not active (1{,}200 queries per cell).}
\label{tab:keynoise}
\end{table}

\section{Misattribution and Abstention on LoCoMo}
\label{app:locomo}

\paragraph{Misattribution on LoCoMo.} In one store built from all ten conversations (13{,}435 facts and 537 questions, including 186 from the three conversations with a name-sharing speaker), 0.543 of prompts for a question about such a speaker contain a fact from another speaker's conversation, judged by provenance rather than by value labels. Filtering on the interlocutor reduces this exposure to 0.301 without lowering recall on LoCoMo, where the evaluated questions concern the speakers themselves. The graph affinity of Section~\ref{sec:identity} lowers the share of off-conversation lines from 0.371 to 0.237 but does not reduce name-sharing exposure (Table~\ref{tab:locomo}).

The posterior in Eq.~(\ref{eq:identity}) requires distinct stored entities, as when the store contains ``Alex Chen'' and ``Alex Rivera''. LoCoMo's two Johns are both normalized to \texttt{john}, so there is no separate candidate set for the posterior to resolve and the ranking is unchanged. Graph affinity uses who each speaker is discussed with but is insufficient to recover the distinction. Name-sharing exposure therefore remains at 0.543 on this corpus, compared with 0.121 on the controlled benchmark and 0.032 after the posterior there.

\paragraph{Abstention on LoCoMo.} Rebuilding one store per conversation and querying it with its own 537 questions plus 1{,}800 questions from the other nine conversations, the ungated ranker returns a block for almost every unanswerable request and abstains on only 7.3\%. At $\tau=1.5$, the margin gate abstains on 95.0\% of unanswerable requests but also on 73.9\% of answerable requests, reducing current-value recall from 0.220 to 0.099. Increasing $\tau$ moves further along the same tradeoff. The slot gate cannot be evaluated because LoCoMo provides no slot inventory. The controlled benchmark therefore supports schema-based abstention, while this open corpus does not provide an operating point for the schema-free margin rule that preserves useful recall.

\begin{table}[h]
\centering
\small
\setlength{\tabcolsep}{4pt}
\begin{tabular}{@{}lrrrr@{}}
\toprule
Serving rule & Wrong person & \quad name-sharing & Off-conversation lines & Current \\
\midrule
none & 0.188 & 0.543 & 0.371 & 0.078 \\
participant filter & 0.104 & 0.301 & 0.087 & 0.102 \\
context affinity & 0.190 & 0.548 & 0.237 & 0.050 \\
\bottomrule
\end{tabular}
\caption{Misattribution on LoCoMo, judged by provenance over one store built from all ten conversations (13{,}435 facts, 537 questions, of which 186 come from the three conversations with a name-sharing speaker). ``Wrong person'': the block holds a fact from a conversation whose speakers share the queried first name. ``Off-conversation lines'': the share of all served lines that come from another conversation. ``Current'': the gold answer occurs in the block, which lexical retrieval over sentence-level facts achieves rarely on this corpus, so the column bounds exposure and not recall.}
\label{tab:locomo}
\end{table}
\section{Query-Time Model Baselines}
\label{app:systems}\label{sec:systems}

Each failure can also be addressed with a model call on the request path. We compare three such baselines with the corresponding rule-based alternatives on 480 main-benchmark queries over two seeds and 180 hard-slice queries (Table~\ref{tab:systems}). A \emph{temporal reranker} receives dated candidate lines from a validity-agnostic retriever and selects the values in force. An \emph{entity resolver} receives the request and the candidate people sharing its name and chooses one. An \emph{abstention judge} receives the request and memory block and decides whether the block answers the request.

The reranker recovers less than half of what write-time supersession gives and pays for it twice. It does cut stale exposure, from 0.800 to 0.381, but current-value recall falls from 0.998 to 0.454 as it discards lines it should have kept, so clean retrieval reaches 0.302 against the keyed store's 0.894; and it costs 654~ms at the median on the request path, against 0.25~ms for the whole lexical retrieval it is reranking. The abstention judge is worse: it abstains on two thirds of the unanswerable hard-slice requests, where the slot gate abstains on all of them, and abstains so freely elsewhere that clean retrieval falls from 0.894 to 0.569, for 912~ms a request.

Entity resolution is the strongest of the three model baselines. It reduces wrong-person exposure from 0.106 to 0.031 and reaches 0.931 clean retrieval, compared with 0.969 for the posterior in Eq.~(\ref{eq:identity}). Because the resolver runs only on ambiguous mentions, its median request-path cost is zero, but its 99th percentile is 1{,}967~ms. The temporal reranker and abstention judge are both slower and less accurate than the corresponding structured rules on these tasks. Where no schema-free rule succeeds, the model call also fails to remove the underlying ambiguity or abstention tradeoff.

\begin{table}[h]
\centering
\small
\setlength{\tabcolsep}{4pt}
\begin{tabular}{@{}lrrrrr@{}}
\toprule
Arm & Stale & Wrong & Clean & Abstained (no answer) & Model ms p50 \\
\midrule
bm25+P & 0.800 & 0.006 & 0.196 & 0.000 & -- \\
llm temporal rerank & 0.381 & 0.008 & 0.302 & 0.000 & 653.6 \\
pfm & 0.000 & 0.106 & 0.894 & 0.000 & -- \\
identity hard & 0.000 & 0.031 & 0.969 & 0.000 & -- \\
identity hard|abstain slot@0.5 & 0.000 & 0.031 & 0.969 & 1.000 & -- \\
llm abstain & 0.000 & 0.054 & 0.569 & 0.667 & 912.1 \\
llm entity resolve & 0.000 & 0.031 & 0.931 & 0.000 & 0.01 \\
participant filter & 0.000 & 0.000 & 1.000 & 0.000 & -- \\
\bottomrule
\end{tabular}
\caption{Query-time model calls against the serving-side rules they would replace (qwen2:7b; 480 main-benchmark queries over seeds 7 and 11, 180 hard-slice queries). Stale, wrong and clean are on the main benchmark; ``abstained'' is on the 60 unanswerable hard-slice requests. ``Model ms'' is the median wall time of the request-path model call, which is zero for the entity resolver because it fires only on an ambiguous mention (its 99th percentile is 1{,}967~ms). The reranker is applied to BM25 + $P_t$, which is the arm that needs it; every other row serves active facts.}
\label{tab:systems}
\end{table}

\section{Response Level}
\label{app:llm}

Table~\ref{tab:llm} reports the 7B model on the main benchmark and Table~\ref{tab:llm_llama} the 3B model on the same blocks. Two further studies check that those conclusions do not rest on an unbroken prompt or on greedy decoding.

\paragraph{Prompt errors under key noise and out-of-order arrival.} Under injected key noise at $p=q=0.05$, the memory block contains a superseded value on 18.3\% of requests; the 7B and 3B models reproduce one on 4.2\% and 10.0\% of requests, respectively. Prompt-level stale exposure therefore overstates stale generation in this condition, especially for the larger model. Out-of-order arrival behaves differently: the block contains a superseded value on 13.3\% of requests and the 7B model reproduces one on the same 13.3\%. The distinction is consistent with prompt composition. Key-noise prompts usually contain the current and stale values together, whereas out-of-order prompts often contain only the stale value.

On the hard-identity slice the serving rules of Section~\ref{sec:identity} carry into the text. With no rule the block holds a competitor's value on every request and the models write it on 30.6\% and 33.3\%. The identity posterior removes the exposure on the answerable requests, leaving it only on the unanswerable ones, and the completion rate does not move (0.333 and 0.306), because the requests it still exposes are exactly the ones the models answer wrongly. Adding the slot gate removes both: 100\% of the 7B model's completions and 97.2\% of the 3B model's are correct, where correct on an unanswerable request means naming no value.

\paragraph{Sampling rather than greedy decoding.} We repeat the keyed-versus-keyless comparison at temperature 0.7 with three decoding seeds, 720 completions per arm. The keyed store reaches 0.921 correct against 0.929 greedy, and the keyless store 0.692 against 0.704, so the gap is 0.229 sampled against 0.225 greedy. The comparison does not depend on the decoding rule.

\begin{table}[h]
\centering
\small
\setlength{\tabcolsep}{3.5pt}
\begin{tabular}{@{}lrrrrrrrrrr@{}}
\toprule
Memory & Curr. & Stale & Wrong & Correct & Mixed & Stale & Wrong & Other & Chars & TTFT \\
\midrule
None & -- & -- & -- & 0.000 & 0.000 & 0.004 & 0.000 & 0.996 & 0 & 325 \\
PFM & 1.000 & 0.000 & 0.125 & 0.929 & 0.000 & 0.000 & 0.013 & 0.058 & 472 & 1229 \\
PFM, keyless & 0.908 & 0.642 & 0.013 & 0.704 & 0.008 & 0.188 & 0.000 & 0.100 & 476 & 1236 \\
BM25 & 0.854 & 0.692 & 0.092 & 0.629 & 0.000 & 0.254 & 0.004 & 0.113 & 468 & 1365 \\
BM25 + latest-value instruction & 0.854 & 0.692 & 0.092 & 0.654 & 0.000 & 0.233 & 0.004 & 0.108 & 468 & 1506 \\
BM25 + validity & 0.887 & 0.000 & 0.133 & 0.842 & 0.000 & 0.000 & 0.004 & 0.154 & 465 & 1388 \\
Dense + validity & 1.000 & 0.000 & 0.171 & 0.946 & 0.000 & 0.000 & 0.000 & 0.054 & 476 & 1469 \\
\bottomrule
\end{tabular}
\caption{Response level on the 240 seed-7 queries (qwen2:7b, greedy). Prompt columns: exposure in the injected block. Completion columns: judged outcome. Chars: mean block length. TTFT: median time to first token in ms, including a full prefill. The ``None'' arm has no memory, so its zero correct rate holds by construction.}
\label{tab:llm}
\end{table}
\begin{table}[h]
\centering
\small
\setlength{\tabcolsep}{3.5pt}
\begin{tabular}{@{}lrrrrrrrrrr@{}}
\toprule
Memory & Curr. & Stale & Wrong & Correct & Mixed & Stale & Wrong & Other & Chars & TTFT \\
\midrule
None & -- & -- & -- & 0.000 & 0.000 & 0.000 & 0.000 & 1.000 & 0 & 168 \\
PFM & 1.000 & 0.000 & 0.125 & 0.912 & 0.000 & 0.000 & 0.017 & 0.071 & 472 & 544 \\
PFM, keyless & 0.908 & 0.642 & 0.013 & 0.592 & 0.000 & 0.279 & 0.000 & 0.129 & 476 & 623 \\
BM25 & 0.854 & 0.692 & 0.092 & 0.454 & 0.000 & 0.408 & 0.004 & 0.133 & 468 & 830 \\
BM25 + latest-value instruction & 0.854 & 0.692 & 0.092 & 0.504 & 0.000 & 0.321 & 0.004 & 0.171 & 468 & 648 \\
BM25 + validity & 0.887 & 0.000 & 0.133 & 0.817 & 0.000 & 0.000 & 0.004 & 0.179 & 465 & 809 \\
Dense + validity & 1.000 & 0.000 & 0.171 & 0.887 & 0.000 & 0.000 & 0.008 & 0.104 & 476 & 829 \\
\bottomrule
\end{tabular}
\caption{Response level with the second model (llama3.2:3b, 4-bit, greedy) on the same 240 queries and the same blocks as Table~\ref{tab:llm}.}
\label{tab:llm_llama}
\end{table}
\begin{table}[h]
\centering
\scriptsize
\setlength{\tabcolsep}{3.5pt}
\begin{tabular}{@{}lllrrrrrr@{}}
\toprule
Condition & Model & Memory & $n$ & Prompt stale & Prompt wrong & Correct & Stale & Wrong \\
\midrule
hard identity & llama3.2 & PFM & 36 & 0.000 & 1.000 & 0.694 & 0.000 & 0.306 \\
hard identity & llama3.2 & pfm identity & 36 & 0.000 & 0.333 & 0.694 & 0.000 & 0.306 \\
hard identity & llama3.2 & pfm identity abstain & 36 & 0.000 & 0.000 & 0.972 & 0.000 & 0.000 \\
hard identity & qwen2 & PFM & 36 & 0.000 & 1.000 & 0.667 & 0.000 & 0.333 \\
hard identity & qwen2 & pfm identity & 36 & 0.000 & 0.333 & 0.667 & 0.000 & 0.333 \\
hard identity & qwen2 & pfm identity abstain & 36 & 0.000 & 0.000 & 1.000 & 0.000 & 0.000 \\
key noise p=0.05 q=0.05 & llama3.2 & PFM & 240 & 0.183 & 0.087 & 0.754 & 0.100 & 0.017 \\
key noise p=0.05 q=0.05 & qwen2 & PFM & 240 & 0.183 & 0.087 & 0.825 & 0.042 & 0.017 \\
out of order 0.3 & llama3.2 & PFM & 240 & 0.133 & 0.108 & 0.796 & 0.125 & 0.013 \\
out of order 0.3 & qwen2 & PFM & 240 & 0.133 & 0.108 & 0.804 & 0.133 & 0.008 \\
sampling & qwen2 & PFM & 720 & 0.000 & 0.125 & 0.921 & 0.000 & 0.010 \\
sampling & qwen2 & pfm keyless & 720 & 0.642 & 0.013 & 0.692 & 0.186 & 0.000 \\
\bottomrule
\end{tabular}
\caption{Response level where the prompt is broken on purpose. Prompt columns are exposure in the injected block; the last three are the judged completion. The sampling rows are 720 completions per arm at temperature 0.7 over three decoding seeds; every other row is greedy.}
\label{tab:llm_slices}
\end{table}

\section{Benchmark Slices}
\label{app:slices}\label{sec:slices}

The results so far hold on a benchmark where each slot has one value, messages arrive in the order they were written, the participant of a message is the person the fact is about, and every query has an answer. Four slices break one assumption each, with the same generator, arms and metrics (Table~\ref{tab:slices}).

Key granularity cuts both ways: when one person holds both a standup and a sync, a key at (entity, slot) merges them and current-value recall falls to 0.500, while a key at (entity, slot, noun) recalls every current value and the keyless store exposes a superseded value on every query. Coarse keys delete and fine keys stop recognizing revisions; the main benchmark hides the choice because each (entity, slot) pair has one instance. Arrival order is not statement order: delivering 30\% of chains out of order costs a transaction-time store 0.130 clean retrieval (0.877 to 0.747) with stale exposure 0.156, because a late message restating an old value closes the newer one, and Proposition~\ref{prop:one_active} still holds with the wrong value active. Ordering by stated time restores the in-order numbers exactly, at the cost of trusting a stated time at commit.

The remaining slices concern identity and abstention. Clean retrieval remains between 0.865 and 0.965 when the participant is the subject, a third party, a group member, or missing because the fact text still names its subject. A hard participant filter, however, recalls the current value on only 2.0\% of third-party facts and 1.0\% of facts with no participant, showing the recall cost of treating the interlocutor as the only eligible identity. None of the ungated retrieval arms abstains. With same-first-name people whose messages share slot, noun, template and arrival day, every arm exposes the competitor's value on every query except the participant filter and a relative cutoff at 0.7 (which leaves 0.092); on queries whose answer is not in the store, every arm fills the block with someone else's value of the right type, and the cutoff does not help, because the exposure comes from the top-scoring facts.

\paragraph{Leave-one-out ablation.}\label{sec:ablation}

Removing one component at a time (Table~\ref{tab:ablation}) leaves participant signals as the only ranking components with a measurable effect: dropping the participant field, the participant query terms, or both lowers clean retrieval by 0.036, 0.067, and 0.122, and on ambiguous queries clean retrieval falls from 0.341 to 0.130. The entity field, graph, usage term, recency, and term cap do not change aggregate clean retrieval on this benchmark; removing the graph or usage term changes no query outcome. These null effects are specific to the benchmark and do not establish that the components are generally useless. Combined with the equivalence result in Section~\ref{sec:temporal}, the retrieval evidence supports an active-only store with participant-aware lexical retrieval. The graph has a separate role in Section~\ref{sec:idresults}, where it contributes context affinity for resolving ambiguous names rather than improving the baseline ranking score.

\paragraph{Identity baselines on the main benchmark.} A hard participant filter removes all wrong-person exposure and lifts clean retrieval to 0.998; a relative score cutoff at 0.7 reaches 0.983, and at 0.9 it removes the exposure along with 13\% of current values. Both exploit a property of this benchmark, that the participant is the subject of the fact, which Section~\ref{sec:slices} removes and Section~\ref{sec:idresults} replaces with a rule that does not need it.

\begin{table}[h]
\centering
\scriptsize
\setlength{\tabcolsep}{4pt}
\begin{tabular}{@{}lrrrrr@{}}
\toprule
Condition & $n$ & Current & Stale & Wrong & Clean \\
\midrule
\emph{Two instances of one slot type} &  &  &  &  &  \\
key(entity,slot) & 240 & 0.500 & 0.000 & 0.000 & 0.500 \\
key(entity,slot,noun) & 240 & 1.000 & 0.000 & 0.000 & 1.000 \\
keyless & 240 & 1.000 & 1.000 & 0.000 & 0.000 \\
\midrule
\emph{Out-of-order arrival} &  &  &  &  &  \\
in order & 1200 & 0.998 & 0.000 & 0.121 & 0.877 \\
out of order & 1200 & 0.846 & 0.156 & 0.103 & 0.747 \\
out of order, event time & 1200 & 0.998 & 0.000 & 0.121 & 0.877 \\
\midrule
\emph{Participant is not the subject} &  &  &  &  &  \\
pfm, subject & 400 & 0.995 & 0.000 & 0.068 & 0.927 \\
pfm, third party & 200 & 0.985 & 0.000 & 0.120 & 0.865 \\
pfm, group & 400 & 0.995 & 0.000 & 0.030 & 0.965 \\
pfm, missing & 200 & 1.000 & 0.000 & 0.070 & 0.930 \\
pfm, participant filter, subject & 400 & 1.000 & 0.000 & 0.005 & 0.995 \\
pfm, participant filter, third party & 200 & 0.020 & 0.000 & 0.005 & 0.020 \\
pfm, participant filter, group & 400 & 0.998 & 0.000 & 0.000 & 0.998 \\
pfm, participant filter, missing & 200 & 0.010 & 0.000 & 0.000 & 0.010 \\
\midrule
\emph{Hard name collisions and unanswerable queries} &  &  &  &  &  \\
pfm & 120 & 1.000 & 0.000 & 1.000 & 0.000 \\
pfm, unanswerable & 60 & 0.000 & 0.000 & 1.000 & 0.000 \\
pfm, cutoff@0.7 & 120 & 1.000 & 0.000 & 0.092 & 0.908 \\
pfm, cutoff@0.7, unanswerable & 60 & 0.000 & 0.000 & 1.000 & 0.000 \\
pfm, participant filter & 120 & 1.000 & 0.000 & 0.000 & 1.000 \\
pfm, participant filter, unanswerable & 60 & 0.000 & 0.000 & 1.000 & 0.000 \\
\bottomrule
\end{tabular}
\caption{Slices that break one assumption of the main benchmark. Rows within a block share a corpus. For unanswerable queries, ``wrong'' counts blocks that contain some other person's value of the queried slot type, and ``clean'' is the second case of Eq.~(\ref{eq:clean}), which requires exposing none; no arm in this table can return an empty block, so every one of them scores zero there, and Table~\ref{tab:identity} adds the arms that can.}
\label{tab:slices}
\end{table}
\section{Proofs}
\label{app:proofs}

\subsection{Proposition~\ref{prop:one_active}}

\begin{proof}
Order the commits that carry key $\kappa$ by commit time $u_1<u_2<\cdots$. Because commits are atomic with respect to readers, every reader-visible state lies between two consecutive commits, and it suffices to show the claim after each commit. Before $u_1$ it holds by assumption. Suppose it holds before commit $j$. If no fact with key $\kappa$ is active, commit $j$ activates the new fact $f_j$ with $a_{f_j}=u_j$ and $b_{f_j}=\infty$, leaving exactly one active fact. Otherwise the induction hypothesis says the active fact $f$ with key $\kappa$ is unique, and the commit sets $b_f=u_j$ and $a_{f_j}=u_j$. Since intervals are half-open, $f$ is inactive and $f_j$ active at every $\tau\ge u_j$ until a later commit closes $f_j$, so the count stays at one. Commits for other keys and pruning (which never removes the active fact of a key) do not change activity intervals of facts with key $\kappa$. By induction the claim holds after every commit and hence in every reader-visible state.
\end{proof}

The argument uses only the commit order. If revisions arrive out of order, the invariant still holds but the surviving value is the latest to arrive.

\subsection{Proposition~\ref{prop:waiting}}

\begin{proof}
Snapshot mode. The request thread performs one reference read. By (ii) it never blocks on construction or a worker, by (v) it runs within $\sigma$ of becoming runnable, and by (iii) the read takes at most $r$; by (i) the reference always points to a complete snapshot. The added time is at most $\sigma+r$.

Fresh mode. The request thread enqueues its context, which by (iv) takes at most $h$ and does not block on the worker, then waits on a completion event with timeout $\omega_t$. If the worker finishes within $\omega_t$, the wait returns and, by (v), the request resumes at most $\sigma$ later, for at most $h+\omega_t+\sigma$. Otherwise the timeout expires, the thread sets the cancellation flag without blocking, and reads the snapshot as in snapshot mode, for a total of at most $h+\omega_t+\sigma+r$. In neither case does the request thread wait for a lock or an operation of construction, so the bounds do not depend on extraction, index maintenance, graph updates, or pruning.
\end{proof}

The bound is informative only when $h$ and $\sigma$ are measured on the serving path, as in Appendix~\ref{app:serving}; a scheduling delay measured elsewhere in the process need not bound $\sigma$ here. A worker may also wait for the store lock, for example during pruning. Cancellation lets it exit as soon as the lock is acquired, but with a single worker a fresh request can still queue behind a cancelled retrieval that is draining, and that queueing is part of $h$. The bound excludes model latency and does not constrain the age of the snapshot returned.

\end{document}